\numberwithin{equation}{section}
\newtheorem{thm}{Theorem}[section]
\newcolumntype{L}[1]{>{\RaggedRight\hsize=#1\hsize}X}
\newcolumntype{C}[1]{>{\Centering\hsize=#1\hsize\hspace{0pt}}X}
\date{}
\begin{document}
\newcommand{\bea}{\begin{eqnarray}}
\newcommand{\ena}{\end{eqnarray}}
\newcommand{\beas}{\begin{eqnarray*}}
\newcommand{\enas}{\end{eqnarray*}}
\newcommand{\beq}{\begin{equation}}
\newcommand{\enq}{\end{equation}}
\def\qed{\hfill \mbox{\rule{0.5em}{0.5em}}}
\newcommand{\bbox}{\hfill $\Box$}
\newcommand{\ignore}[1]{}
\newcommand{\ignorex}[1]{#1}
\newcommand{\wtilde}[1]{\widetilde{#1}}
\newcommand{\qmq}[1]{\quad\mbox{#1}\quad}
\newcommand{\qm}[1]{\quad\mbox{#1}}
\newcommand{\nn}{\nonumber}
\newcommand{\Bvert}{\left\vert\vphantom{\frac{1}{1}}\right.}
\newcommand{\To}{\rightarrow}
\newcommand{\E}{\mathbb{E}}
\newcommand{\Var}{\mathrm{Var}}
\newcommand{\Cov}{\mathrm{Cov}}
\newcommand{\Corr}{\mathrm{Corr}}
\newcommand{\dist}{\mathrm{dist}}
\newcommand{\diam}{\mathrm{diam}}
\makeatletter
\newsavebox\myboxA
\newsavebox\myboxB
\newlength\mylenA
\newcommand*\xoverline[2][0.70]{%
    \sbox{\myboxA}{$\m@th#2$}%
    \setbox\myboxB\null
    \ht\myboxB=\ht\myboxA%
    \dp\myboxB=\dp\myboxA%
    \wd\myboxB=#1\wd\myboxA
    \sbox\myboxB{$\m@th\overline{\copy\myboxB}$}
    \setlength\mylenA{\the\wd\myboxA}
    \addtolength\mylenA{-\the\wd\myboxB}%
    \ifdim\wd\myboxB<\wd\myboxA%
       \rlap{\hskip 0.5\mylenA\usebox\myboxB}{\usebox\myboxA}%
    \else
        \hskip -0.5\mylenA\rlap{\usebox\myboxA}{\hskip 0.5\mylenA\usebox\myboxB}%
    \fi}
\makeatother

\newtheorem{theorem}{Theorem}[section]
\newtheorem{corollary}[theorem]{Corollary}
\newtheorem{conjecture}[theorem]{Conjecture}
\newtheorem{proposition}[theorem]{Proposition}
\newtheorem{lemma}[theorem]{Lemma}
\newtheorem{definition}[theorem]{Definition}
\newtheorem{example}[theorem]{Example}
\newtheorem{remark}[theorem]{Remark}
\newtheorem{case}{Case}[section]
\newtheorem{condition}{Condition}[section]
\newcommand{\proof}{\noindent {\it Proof:} }

\title{{\bf\Large A Bayesian cluster validity index}}
\author{Nathakhun Wiroonsri\thanks{This author is financially supported by National Research Council of Thailand (NRCT), Grant number: N42A660991 (2023). Email: nathakhun.wir@kmutt.ac.th} and Onthada Preedasawakul \thanks{Email: o.preedasawakul@gmail.com }  \\ Mathematics and Statistics with Applications Research Group (MaSA) \\ Department of Mathematics, King Mongkut's University of Technology Thonburi}

\footnotetext{AMS 2010 subject classifications: Primary 62H30\ignore{Cluster Analysis} Secondary 68T10\ignore{Pattern recognition}.}

\maketitle

\begin{abstract}
Selecting the appropriate number of clusters is a critical step in applying clustering algorithms. To assist in this process, various cluster validity indices (CVIs) have been developed. These indices are designed to identify the optimal number of clusters within a dataset. However, users may not always seek the absolute optimal number of clusters but rather a secondary option that better aligns with their specific applications. This realization has led us to introduce a Bayesian cluster validity index (BCVI), which builds upon existing indices. The BCVI utilizes either Dirichlet or generalized Dirichlet priors, resulting in the same posterior distribution. We evaluate our BCVI using the Wiroonsri index for hard clustering and the Wiroonsri–Preedasawakul index  for soft clustering as underlying indices. We compare the performance of our proposed BCVI with that of the original underlying indices and several other existing CVIs, including Davies–Bouldin, Starczewski, Xie–Beni, and KWON2 indices. Our BCVI offers clear advantages in situations where user expertise is valuable, allowing users to specify their desired range for the final number of clusters. To illustrate this, we conduct experiments classified into three different scenarios. Additionally, we showcase the practical applicability of our approach through real-world datasets, such as MRI  brain tumor images. These tools will be published as a new R package `BayesCVI'. 
\end{abstract}

\textbf{Keyword}: Cluster analysis, CVI, Dirichlet, Fuzzy c-means, K-means, MRI 



\section{Introduction} \label{sec:introduction}

Cluster analysis is a well-known unsupervised learning tool in statistical and machine learning. It is used to split observations into groups with similar behaviors (refer to the book by \cite{statbook2023} for a review). Researchers apply cluster analysis to solve problems in various fields, ranging from social science to outer space. There are different types of clustering algorithms, including centroid-based clustering (such as K-means and fuzzy c-means (FCM)), hierarchical clustering (which includes single linkage, complete linkage, group average agglomerative, and Ward's criterion), density-based clustering (such as DBSCAN, DENCLUE, and OPTICS), probabilistic clustering (such as EM), grid-based clustering (such as CLIQUE, MAFIA, ENCLUS, and OptiGrid), and spectral clustering (see \cite{DataClusteringBook2014} for more detailed information on these techniques). Recently, there has been significant attention given to deep learning clustering \cite{ClusteringWithDeepLearning2018} and 3D point cloud clustering \cite{3Dpointclouds2020,3Dpointclouds2021,3Dpointclouds2023}. Some examples of 3D point cloud techniques include PointNet, PointNet++, DGCNN, and RandLA-Net.

Before the actual clustering process, most clustering algorithms require a necessary step known as clustering tendency assessment. This pre-clustering process aims to determine the existence of clusters in a dataset and, if present, to identify an appropriate unknown number of clusters (refer to \cite{EvaluationandValid, Clusteringtendency2020} for more details). A cluster validity index (CVI) is commonly employed to perform this task. 

In this work, we introduce a new concept of CVI that can be applied to any clustering algorithm, along with compatible CVIs. However, we specifically focus on two classic hard and soft clustering algorithms: K-means and FCM,   to illustrate our idea. While it is impractical to list all existing CVIs here, \cite{DB1979,STR2017} are notable examples for hard clustering, and \cite{XB1991,KWON2021} for soft clustering (refer to \cite{compare} and references therein for more examples). Recently, \cite{WIR2023} and \cite{WP2023} introduced correlation-based CVIs called the Wiroonsri index (WI) and Wiroonsri–Preedasawakul index (WP), which can accurately detect the optimal number of clusters and provide information about suboptimal numbers, enabling users to rank several options. These two indices are compatible with hard and soft clustering methods, respectively. The key concept in both works is allowing users to select the final number of clusters at a local peak rather than the global one, based on their experience and application needs. This concept motivated us to integrate Bayesian principles with these indices and other existing ones.
	 
\textbf{Why experience matters:} In domains like business and healthcare, researchers possess valuable insights to estimate the number of clusters they expect. For example, one may anticipate the number of customer segments or types of lung cancer to range from 3 to 6, or understand that 2 or 3 colors are sufficient for visualizingMagnetic Resonance Imaging (MRI) images. Traditional CVIs might detect an optimal number of clusters that fall outside this expected range. Bayesian analysis, a statistical approach widely used in various fields including medical diagnosis, is particularly relevant. Bayesian deep learning,  introduced by \cite{BayeMed}, has found success in healthcare applications such as disease diagnostics, medical imaging, clinical signal processing, and electronic health records. Additionally, \cite{Bayemed1} presented a novel approach for estimating kinetic parameters in DCE-MRI using adaptive Gaussian Markov random fields. Bayesian analysis also holds significance in business and marketing contexts (see \cite{BayeBussiness2,BayeBussiness3,BayeBussiness} for examples). 

\textbf{When experience matters:} We propose a new Bayesian cluster validity index (BCVI), which is based on existing CVIs. We offer the option to use either a Dirichlet prior or a generalized Dirichlet (GD) prior for determining the optimal number of cluster candidates. This allows users to set parameters according to their experience in specific contexts. The posterior distribution remains unchanged, but the parameters are adjusted based on the data, allowing the final optimal number of clusters to be determined by considering both the data and knowledge/experience. 	

The Bayesian concept has previously been used in cluster analysis, where Bayesian clustering is used to group data points based on similarities by defining a prior distribution on the unknown partition (see \cite{BayesianClus} for a comprehensive review). One of the main strengths of Bayesian clustering is its ability to handle complex data structures and dependencies among clusters by integrating prior knowledge and assumptions into the modeling process. Furthermore, Bayesian methods can automatically determine the number of clusters by estimating the posterior distribution on the partition. Probabilistic CVIs have been studied for over two decades. \cite{DefPBidx} discussed probabilistic CVIs for normal mixtures, identifying five broad types including likelihood-based criteria, information-based criteria, Bozdogan's entropic complexity criteria, minimum information ratios, and other miscellaneous indices (see \cite{DefPBidx} and references therein for further details). Additionally, \cite{BayeCVIs} introduced a probabilistic CVI compatible with iterative Bayesian fuzzy clustering. To the best of our knowledge, there is currently a lack of literature directly discussing Bayesian cluster validity indices, which further motivates the development of our BCVI.

Our BCVI is defined based on the assumption that a ratio, which is derived from an underlying index as defined in Section \ref{sec:main}, follows a multinomial distribution given ${\bf p}$. Here, ${\bf p}$ follows a prior Dirichlet or GD distribution (${\bf p} = (p_1,p_2,\ldots,p_K)$, representing the probability that the actual number of groups is $k$ for $k=1,2,\ldots,K$). This allows users to set the parameters of the prior distribution based on their knowledge and intentions. As a result, the final number of clusters typically falls within the expected range, which is more relevant for users. We assess the performance of the BCVI with the underlying WI and WP through K-means and FCM on simulated datasets categorized into three cases, as well as real-world datasets including MRI images. Our evaluation presents the performance of BCVI alongside the original WI and WP, as well as four additional existing CVIs: Davies–Bouldin (DB), Starczewski (STR), Xie–Beni (XB), and KWON2 indices outlined in the subsequent section. Although this is not a direct comparison test due to the novelty of the concept, it serves to confirm the claimed benefits of BCVI.

The remaining sections of this work are organized as follows. Section \ref{sec:background} offers necessary background information about cluster algorithms, CVIs, and probability distributions used in this study. Our proposed index, along with its mathematical properties, is presented in Section \ref{sec:main}. Section \ref{sec:exp} delves into experimental results and potential applications to MRI images. Finally, Section \ref{sec:conclusion} provides concluding remarks and discusses potential future directions.

\section{Background} \label{sec:background}

In this section, we provide the necessary definitions and background information, including clustering algorithms, cluster validity indices, and related probability distributions.

\subsection{Cluster analysis and cluster validity index} 

In this subsection, we offer brief definitions of K-means and FCM, as well as three hard and three soft CVIs, including the two recent indices introduced in \cite{WIR2023} and \cite{WP2023}, which will serve as underlying indices for our proposed BCVI in the subsequent section. 

Let $n,k,p \in \mathbb{N}$ and denote $[n] = \{1,2,\ldots,n\}$. We establish the following notations used in this work. For $i\in[n]$ and $j \in [k]$, denote 

\begin{enumerate} \label{notations}
    \item $x_i = (x_{i1},x_{i2},\ldots,x_{ip})$: Data points.
    \item $K$: Actual number of clusters.
    \item $C_j$: Set of data points in the $j^{th}$ cluster.
    \item $v_j$: $j^{th}$ cluster centroid.
    \item $v_0$: Centroid of the entire dataset.
    \item $\bar{v}$: Centroid of all $v_j$
    \item $\mu = \left(\mu_{ij}\right)$: Membership degree matrix where $\mu_{ij}$ denotes the degree to which a sample point $x_i$ belongs to $C_j$.
    \item $\| x-y \|$: Euclidean distance between $x$ and $y$.
    \item $\Corr(\cdot,\cdot)$ Correlation coefficient. In this work, we consider only the Pearson correlation.
\end{enumerate}

\subsubsection{K-means}

K-means \cite{Kmeans} is a simple yet efficient clustering algorithm. It operates by partitioning a dataset into $k$ clusters, where $k$ is a user-defined parameter. The algorithm commences by randomly initializing cluster centroids. Subsequently, it assigns each data point to the nearest centroid and updates the centroids based on the newly assigned points. This iterative process continues until the cluster centroids converge.
The objective of K-means is to minimize the within-cluster sum of squares, expressed as:
\begin{equation*}
    \sum_{j=1}^k\sum_{x \in C_j} \|x-v_j\|^2 .
\end{equation*}

\subsubsection{Fuzzy C-means}

FCM, introduced by \cite{DUNN1973} and later refined by \cite{FCM1984}, is a clustering technique utilized to group similar data points into user-specified $c$ clusters. Each data point is assigned a membership degree, indicating the degree of belongingness to each cluster. The objective of FCM is to minimize the target function:

\begin{equation*}\label{objfunc}
    \sum_{i=1}^n\sum_{j=1}^{c}\mu_{ij}^m\| {x}_i-{v}_j\|^2,   
\end{equation*}
where $m > 1$ denotes the fuzziness parameter. 

The optimization of \eqref{objfunc} begins with random initialization of centroids $v_j$. Iteratively, the membership degrees are updated according to:

\begin{equation*}\label{membership}
    \mu_{ij} = \frac{1}{\sum_{k=1}^c\left(\frac{\|{x}_i-{v}_j\|}{\|{x}_i-{v}_k\|}\right)^\frac{2}{m-1}},
\end{equation*}
and centroids are updated as follows:
\begin{equation*}\label{centroid}
    v_{j} = \frac{\sum_{i=1}^n\mu_{ij}^m x_i}{\sum_{i=1}^n\mu_{ij}^m},
\end{equation*}
for $i \in [n]$ and $j \in [c]$.
This iterative process continues until convergence is achieved.

\subsubsection{Hard cluster validity indices}
\paragraph{Davies-Bouldin index\cite{DB1979}}

The DB's measure is defined as:
\beas
\texttt{DB}(k) = \frac{1}{k}\sum_{i=1}^kR_{i,qt},
\enas
where 
\beas
R_{i,qt} = \max_{j \in [k]\textbackslash \left\{i\right\}}\left\{\frac{S_{i,q}+S_{j,q}}{M_{ij,t}}\right\},
\enas
\beas
S_{i,q} = \left(\frac{1}{|C_i|}\sum_{x \in C_i}\left\|x-v_i\right\|^q\right)^{1/q}
\enas
for $q,t \geq 1$ and $i\neq j \in [k]$, and
\beas
M_{ij,t} = \left(\sum_{s=1}^p|v_{is}-v_{js}|^t\right)^{1/t}.
\enas
The smallest value of $\texttt{DB}(k)$ indicates a valid optimal partition. 

\paragraph{Starczewski index \cite{STR2017}} 
The STR index is defined as 
\beas
\texttt{STR}(k) = [E(k) - E(k-1)][D(k+1) - D(k)],
\enas 
where
$D(k) = \dfrac{\max_{i,j = 1}^k \|v_i-v_j\|}{\min_{i,j =1}^k \|v_i-v_j\|}$, and $E(k) = \dfrac{\sum_{i = 1}^n \|x_i - v_0\|}{\sum_{j=1}^k \sum_{x \in C_j} \|x - v_j\|}$.

The largest value of $\texttt{STR}(k)$ indicates a valid optimal partition.

\paragraph{Wiroonsri index \cite{WIR2023}}
The WI index is defined as follows:
For $m \in \{2,3,\ldots,n-1\}$ and $k \in \{2,3,\ldots,m\}$,
 
\begin{flushleft} \label{nci}
\textbf{Case 1}: $\max_{2\le l \le m} \texttt{NCI1}(k) < +\infty$
\end{flushleft}
\beas
\texttt{NCI}_m(k) =  
                 \begin{cases}
                     \min_{2\le l \le m} \left\{\texttt{NCI1}(l)|\texttt{NCI1}(l)> -\infty\right\}  \text{  \ \ if \ }  \texttt{NCI1}(k) = -\infty  \\
                     \texttt{NCI1}(k)   \text{ \ otherwise},
                 \end{cases} 
\enas

\begin{flushleft}
\textbf{Case 2}: $\max_{2\le l \le m} \texttt{NCI1}(k) = +\infty$
\end{flushleft}
\beas
\texttt{NCI}_m(k) =  
                 \begin{cases}
                     
                     \min_{2\le l \le m} \left\{\texttt{NCI1}(l)|\texttt{NCI1}(l)> -\infty\right\} + \texttt{NCI2}(k)  \text{  \ \ if \ }  \texttt{NCI1}(k) = -\infty \\
										 \max_{2\le l \le m} \left\{\texttt{NCI1}(l)|\texttt{NCI1}(l)< +\infty\right\} + \texttt{NCI2}(k)  \text{  \ \ if \ }  \texttt{NCI1}(k) = +\infty \\
										 \texttt{NCI1}(k)+ \texttt{NCI2}(k)   \text{ \ otherwise},
                 \end{cases} 
\enas
where
\bea \label{nci1}
\texttt{NCI1}(k)
           = \frac{\left(\texttt{NC}(k)-\texttt{NC}(k-1)\right)\left(1-\texttt{NC}(k)\right)}{\max\{0,\texttt{NC}(k+1)-\texttt{NC}(k)\}\left(1-\texttt{NC}(k-1)\right)} 
\ena and 
\bea \label{nci2}
\texttt{NCI2}(k) = \frac{\texttt{NC}(k)-\texttt{NC}(k-1)}{1-\texttt{NC}(k-1)} - \frac{\texttt{NC}(k+1)-\texttt{NC}(k)}{1-\texttt{NC}(k)}, 
\ena
with $\texttt{NC} = \Corr(\Vec{d},\Vec{c}(k))$, $\texttt{NC}(1) = \frac{\texttt{SD}(\vec{d}_v)}{\max \vec{d}_v - \min \vec{d}_v}$. Note that we let
\bea \label{dmdef}
\vec{d}_v = (\|x_i-v_0\|)_{i \in [n]},
\ena
\bea \label{ddef}
\vec{d} = (\|x_i-x_j\|)_{i,j \in [n]}
\ena
be a vector of length ${n \choose 2}$ containing distances of all pairs of data points, and 
\beas \label{cdef}
\vec{c}(k) = (\|v_i(k)-v_j(k)\|)_{i,j \in [n]}
\enas
be a vector of the same length containing the distances of all pairs of corresponding centroids of clusters in which two points are located. The largest value of $\texttt{NCI}(k)$ indicates a valid optimal partition.

\subsubsection{Soft cluster validity indices}
\paragraph{Xie and Beni index \cite{XB1991}}
The XB index is defined as follows:
\beas
  \texttt{XB}(k) = \dfrac{\sum_{j=1}^k\sum_{i=1}^n\mu_{ij}^2\| {x}_i-{v}_j\|^2}
             {n \cdot \min_{j\neq l} \{ \| {v}_j-{v}_l\|^2 \}} .
\enas
The smallest value of $\texttt{XB}(k)$ indicates a valid optimal partition.

\paragraph{KWON2 index \cite{KWON2021}}
The KWON2 index is defined as follows:
\beas
   \texttt{KWON2}(k) = \dfrac{w_1\left[w_2\sum_{j=1}^k\sum_{i=1}^n \mu_{ij}^{2^{\sqrt{\frac{m}{2}}}}  \|{x}_i-{v}_j\|^2  + \dfrac{\sum_{j=1}^k\| {v}_j-{v}_0\|^2}{\max_j \|{v}_j-{v}_0\|^2 } + w_3 \right]}{\min_{i \neq j} \| {v}_i-{v}_j\|^2 + \frac{1}{k}+\frac{1}{k^{m-1}}}
\enas
where $w_1 = \dfrac{n-k+1}{n}$, $w_2 = \left(\dfrac{k}{k-1}\right)^{\sqrt{2}}$ and $w_3=\dfrac{nk}{(n-k+1)^2}$.\\ The smallest value of $\texttt{KWON2}(k)$ indicates a valid optimal partition.

\paragraph{Wiroonsri and Preedasawakul index\cite{WP2023}}

The WP index is defined in three cases. Let $m \in \{2,3,\ldots,n-1\}$ and $k \in \{2,3,\ldots,m\}$,

\begin{flushleft}
\textbf{Case 1}: $\max_{2\le l \le p} \texttt{WPCI1}(k) < +\infty$ and $\exists l \in [p]\backslash \{1\}$ such that $|\texttt{WPCI1}(l)|<\infty$.
\end{flushleft}
\beas
\texttt{WP}_p(k) =  
                 \begin{cases}
                 \scriptstyle
                     \min_{2\le l \le p} \left\{\texttt{WPCI1}(l)|\texttt{WPCI1}(l)> -\infty\right\}  \text{  \ \ if \ }  \texttt{WPCI1}(k) = -\infty  \\
                     \texttt{WPCI1}(k)   \text{ \ otherwise}.
                 \end{cases} 
\enas 

\begin{flushleft}

\textbf{Case 2}: $\max_{2\le l \le p} \texttt{WPCI1}(k) = +\infty$ and $\exists l \in \{2,3,\ldots,p\}$ such that $|\texttt{WPCI1}(l)|<\infty$.
\end{flushleft}
\beas
\texttt{WP}_p(k) =  
                 \begin{cases}
                     \scriptstyle
                     \min_{2\le l \le p} \left\{\texttt{WPCI1}(l)|\texttt{WPCI1}(l)> -\infty\right\} + \texttt{WPCI2}(k)  \text{  \ \ if \ }  \texttt{WPCI1}(k) = -\infty \\
                     \scriptstyle
										 \max_{2\le l \le p} \left\{\texttt{WPCI1}(l)|\texttt{WPCI1}(l)< +\infty\right\} + \texttt{WPCI2}(k)  \text{  \ \ if \ }  \texttt{WPCI1}(k) = +\infty \\
										 \texttt{WPCI1}(k)+ \texttt{WPCI2}(k)   \text{ \ otherwise}.
                 \end{cases} 
\enas
\begin{flushleft}
\textbf{Case 3}:  $\forall l \in \{2,3,\ldots,p\}$, $|\texttt{WPCI1}(l)|=+\infty$.
\beas
\texttt{WP}_p(k) = \texttt{WPCI2}(k),
\enas
\end{flushleft}
where $\texttt{WPCI1}(k)$ and $\texttt{WPCI2}(k)$ are defined similarly to \eqref{nci1} and \eqref{nci2}, respectively, with $\texttt{NC}$ replaced by $\texttt{WPC}$ with
$\texttt{WPC}(k) = \Corr(\vec{d},\vec{\nu}(k))$, $\texttt{WPC}(1) = \frac{\texttt{SD}(\vec{d}_v)}{\max \vec{d}_v - \min \vec{d}_v}$, $\vec{d}_v$ and $\vec{d}$ are as in \eqref{ddef} and \eqref{dmdef}, respectively,
\beas 
o_i(k,\gamma) = \frac{\sum_{j=1}^k \mu_{ij}^{\gamma} v_j}{\sum_{j=1}^k \mu_{ij}^{\gamma}} \text{ \ \ and \ \ } \vec{\nu}(k) = (\|o_i(k,\gamma)-o_j(k,\gamma)\|)_{i,j \in [n]}.
\enas
The largest value of \texttt{WP(k)} indicates a valid optimal partition.

\subsection{Dirichlet  and Generalized Dirichlet distributions}

In this subsection, we state the definitions of the Dirichlet and GD distributions and their necessary properties.

\subsubsection{Dirichlet distribution}

The Dirichlet distribution \cite{Dirichlet} with parameters ${\bm \alpha}=(\alpha_1,\alpha_2,\ldots,\alpha_K)$ where $\alpha_k > 0$ has the probability density function given by
\beas
f(x_1,\ldots,x_K|{\bm \alpha}) = \frac{1}{B({\bm \alpha})} \prod_{k=1}^K x_k^{\alpha_k-1},
\enas
for $0 \le x_k \le 1$ for all $k$ and $\sum_{k=1}^K x_k = 1$ where the multivariate beta function is defined as
\beas
B({\bm \alpha}) = \frac{\prod_{k=1}^K\Gamma(\alpha_k)}{\Gamma\left(\sum_{k=1}^K\alpha_k\right)}.
\enas

The next lemma provides the mean and variance of the Dirichlet distribution.

\begin{lemma} \label{Dmoment}
    Let ${\bf X} = (X_1,\ldots,X_{K})$ has a Dirichlet distribution with parameters $\alpha_1,\ldots,\alpha_{K}$. Then
\end{lemma}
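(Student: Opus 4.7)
The plan is to derive the mean, variance, and covariance of a Dirichlet random vector by exploiting the self-similar structure of the density: shifting any $\alpha_k$ by a positive integer turns the integrand of a moment calculation back into (a multiple of) another Dirichlet density, which integrates to $1$. This trick reduces everything to manipulations of the multivariate beta function $B(\bm{\alpha})$ and the identity $\Gamma(z+1) = z\Gamma(z)$.

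First I would compute $\E[X_k]$ directly. Writing out $\E[X_k] = \int x_k f(x_1,\ldots,x_K \mid \bm{\alpha})\, dx$ and absorbing the extra $x_k$ into the exponent gives an integrand proportional to a Dirichlet density with parameter vector $\bm{\alpha}^{(k)} = (\alpha_1,\ldots,\alpha_k+1,\ldots,\alpha_K)$. Thus the integral equals $B(\bm{\alpha}^{(k)})/B(\bm{\alpha})$, and applying $\Gamma(\alpha_k+1)=\alpha_k\Gamma(\alpha_k)$ in the numerator and $\Gamma(\alpha_0+1)=\alpha_0\Gamma(\alpha_0)$ with $\alpha_0 := \sum_j \alpha_j$ in the denominator collapses this ratio to $\alpha_k/\alpha_0$.

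Next I would apply the same device to $\E[X_k^2]$, which bumps $\alpha_k$ by $2$ and gives $\alpha_k(\alpha_k+1)/[\alpha_0(\alpha_0+1)]$, and to $\E[X_iX_j]$ for $i\ne j$, which bumps each of $\alpha_i,\alpha_j$ by $1$ and yields $\alpha_i\alpha_j/[\alpha_0(\alpha_0+1)]$. Combining these with $\Var(X_k) = \E[X_k^2] - (\E[X_k])^2$ and the analogous identity for covariance produces
\[
\Var(X_k) = \frac{\alpha_k(\alpha_0-\alpha_k)}{\alpha_0^2(\alpha_0+1)}, \qquad \Cov(X_i,X_j) = -\frac{\alpha_i\alpha_j}{\alpha_0^2(\alpha_0+1)}.
\]

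There is essentially no hard step here, since all the integrals are resolved by the normalization identity $\int \prod_k x_k^{\alpha_k-1}\,dx = B(\bm{\alpha})$ on the simplex. The only mild obstacle is keeping the bookkeeping of the factors $\Gamma(\alpha_k+1)/\Gamma(\alpha_k)$ and $\Gamma(\alpha_0+2)/\Gamma(\alpha_0)$ straight so that the final expressions for $\Var(X_k)$ and $\Cov(X_i,X_j)$ simplify cleanly; an alternative I would fall back on, if the bookkeeping became awkward, is the Gamma-ratio construction $X_k = Y_k/\sum_j Y_j$ with independent $Y_k \sim \mathrm{Gamma}(\alpha_k,1)$, from which all moments follow by standard Gamma identities.
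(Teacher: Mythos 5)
Your derivation is correct, and the paper itself offers no proof of this lemma --- it is stated as a standard property of the Dirichlet distribution (citing the original reference), so there is nothing to compare against. Your moment computation via the normalization identity $\int \prod_k x_k^{\alpha_k-1}\,dx = B(\bm{\alpha})$ and the recursion $\Gamma(z+1)=z\Gamma(z)$ is the standard argument and reproduces exactly the stated formulas $\E[X_k]=\alpha_k/\alpha_0$ and $\Var(X_k)=\alpha_k(\alpha_0-\alpha_k)/[\alpha_0^2(\alpha_0+1)]$; the covariance you derive is a correct bonus not required by the statement.
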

\beas \label{Dmean}
\E[X_k] = \frac{\alpha_k}{\alpha_0}, \text{ \ \ and \ \ } \label{Dvar}
Var(X_k) = \frac{\alpha_k(\alpha_0 - \alpha_k)}{\alpha_0^2(\alpha_0 +1 )},
\enas 
where 
\bea \label{alpha0}
\alpha_0 = \sum_{k=1}^K \alpha_k.
\ena

\subsubsection{Generalized Dirichlet 
 distribution}

The GD distribution \cite{WONG1998} with parameters ${\bm \alpha} = (\alpha_1,\alpha_2,\ldots,\alpha_{K-1})$, ${\bm \beta} = (\beta_1,\beta_2,\ldots,\beta_{K-1})$ where $\alpha_k,\beta_k > 0$ has the probability density function given by

\beas
f(x_1,\ldots,x_{K-1}|{\bm \alpha,\bm \beta}) =  \prod_{k=1}^{K-1} \frac{x_k^{\alpha_k-1}(1-x_1-\cdots-x_k)^{\gamma_k}}{B(\alpha_k,\beta_k)}
\enas
for $0 \le x_k\le 1$, and $x_1+x_2+\cdots+x_{K-1} \le 1$ where  $\gamma_k = \beta_k-\alpha_{k+1}-\beta_{k+1}$ for $k \in [K-2]$ and $\gamma_{K-1} = \beta_{K-1}-1$, and $B(\cdot,\cdot)$ is the beta function.

The next lemma is proved in Properties 1 and 2 in \cite{WONG1998}.

\begin{lemma} \label{moment}
    Let $s_1,\ldots,s_{K-1} \in \mathbb{N}_0$, ${\bf X} = (X_1,\ldots,X_{K-1})$ has a GD distribution with parameters $\alpha_1,\ldots,\alpha_{K-1},$ $\beta_1,\ldots,\beta_{K-1}$ and $\delta_k = \sum_{i = k+1}^{K-1} s_i$ for $k \in [K-1]$, the general moment function of ${\bf X}$ is given by
\bea \label{Gmoment}
\E[X_1^{s_1} X_2^{s_2} \ldots X_{K-1}^{s_{K-1}}] = \prod_{k=1}^{K-1} \frac{\Gamma(\alpha_k + \beta_k)\Gamma(\alpha_k + s_k)\Gamma(\beta_k+\delta_k)}{\Gamma(\alpha_k)\Gamma(\beta_k)\Gamma(\alpha_k+\beta_k+s_k+\delta_k)}.
\ena Additionally, the $sth$ moment of $X_K := 1-X_1-\cdots-X_{K-1}$ is given by
\bea \label{GKmoment}
\E[X_K^s] = \prod_{k=1}^{K-1} \frac{\Gamma(\alpha_k+\beta_k)\Gamma(\beta_k+s)}{\Gamma(\beta_k)\Gamma(\alpha_k+\beta_k+s)}.
\ena  

\end{lemma}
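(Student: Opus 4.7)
\textit{Proof plan.} The plan is to exploit the stick-breaking representation of the GD distribution. First I would introduce independent auxiliary random variables $Y_1,\ldots,Y_{K-1}$ with $Y_k \sim \mathrm{Beta}(\alpha_k,\beta_k)$ and set
\[
X_1 = Y_1, \qquad X_k = Y_k\prod_{j=1}^{k-1}(1-Y_j) \text{ for } k=2,\ldots,K-1,
\]
so that $X_K := 1 - X_1 - \cdots - X_{K-1} = \prod_{j=1}^{K-1}(1-Y_j)$. A direct Jacobian computation, using the inverse map $Y_k = X_k/(1 - X_1 - \cdots - X_{k-1})$, shows that the joint density of $(X_1,\ldots,X_{K-1})$ so constructed coincides with the GD density displayed above; this is where the exponent $\gamma_k = \beta_k - \alpha_{k+1} - \beta_{k+1}$ on the $(1-X_1-\cdots-X_k)$ factor emerges, after telescoping the Beta normalizers and collecting the repeated $(1-Y_j)$ factors from the Jacobian.

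Next, to obtain \eqref{Gmoment}, I would compute the power product
\[
\prod_{k=1}^{K-1} X_k^{s_k} = \prod_{k=1}^{K-1} Y_k^{s_k}\prod_{k=1}^{K-1}\prod_{j<k}(1-Y_j)^{s_k}
= \prod_{k=1}^{K-1} Y_k^{s_k}(1-Y_k)^{\delta_k},
\]
where the second equality follows because, for each fixed index $j$, the factor $(1-Y_j)$ appears with total exponent $\sum_{k>j} s_k = \delta_j$. By independence of the $Y_k$'s, the expectation factors, and each factor is a standard Beta integral:
\[
\E\!\left[Y_k^{s_k}(1-Y_k)^{\delta_k}\right] = \frac{\Gamma(\alpha_k+\beta_k)}{\Gamma(\alpha_k)\Gamma(\beta_k)}\cdot\frac{\Gamma(\alpha_k+s_k)\Gamma(\beta_k+\delta_k)}{\Gamma(\alpha_k+\beta_k+s_k+\delta_k)},
\]
which multiplies out to the right-hand side of \eqref{Gmoment}.

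For \eqref{GKmoment}, the stick-breaking identity gives $X_K^s = \prod_{j=1}^{K-1}(1-Y_j)^s$, and independence reduces the expectation to a product of one-dimensional Beta integrals $\E[(1-Y_k)^s] = \Gamma(\alpha_k+\beta_k)\Gamma(\beta_k+s)/(\Gamma(\beta_k)\Gamma(\alpha_k+\beta_k+s))$, giving the claim at once.

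The main obstacle is the verification that the stick-breaking construction actually yields the GD density with precisely the parameters $(\gamma_k)$ as written; the combinatorial bookkeeping of how the $(1-Y_j)$ powers in the Jacobian interact with the Beta factors is the only nontrivial step. Once this identification is in hand, both moment formulas are immediate consequences of independence and the Beta function identity, and no further delicate analysis is required. If one wishes to avoid the Jacobian entirely, an alternative route is induction on $K$: the $K=2$ case is the ordinary Beta moment, and the inductive step integrates out $X_{K-1}$ against a Beta kernel to reduce to a GD with one fewer component, at the cost of more tedious but routine algebra.
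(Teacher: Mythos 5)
Your proposal is correct, but it is worth noting that the paper does not actually prove this lemma at all: it simply cites Properties 1 and 2 of Wong (1998), where the moment formulas are obtained by directly integrating the GD density one coordinate at a time (essentially the inductive route you mention as an alternative). Your stick-breaking argument is a genuinely different and self-contained derivation. The key identity $1 - X_1 - \cdots - X_k = \prod_{j \le k}(1 - Y_j)$ holds by a one-line induction, the Jacobian of the inverse map is lower triangular with determinant $\prod_{k}(1 - X_1 - \cdots - X_{k-1})^{-1}$, and collecting exponents of $T_k := 1 - X_1 - \cdots - X_k$ gives $\beta_k - 1 - \alpha_{k+1} - \beta_{k+1} + 1 = \gamma_k$ for $k \le K-2$ and $\beta_{K-1}-1 = \gamma_{K-1}$, exactly matching the stated density; I checked this bookkeeping and it closes. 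After that, both \eqref{Gmoment} and \eqref{GKmoment} drop out of independence and the Beta integral, with the exponent count $\sum_{k>j} s_k = \delta_j$ being precisely the definition of $\delta_j$ in the statement. What your approach buys is conceptual transparency (the GD is exposed as a product of independent Beta stick-breaks, which also explains \emph{why} the moments factor across $k$); what Wong's direct integration buys is that it never requires verifying a change of variables. Either is acceptable; yours is arguably the cleaner proof to include if one wanted the paper to be self-contained.
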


\section{Our proposed index} \label{sec:main}

Let ${\bf x} = (x_1,x_2,\ldots,x_n)$ denote a dataset of size $n \in \mathbb{N}$. Let $K \in \mathbb{N}$ be the maximum number of clusters to be considered, and let ${\bf p} = (p_2,p_3,\ldots,p_K)$, where $p_k$, $k=2,3,\ldots,K$ represents the probability that the dataset consists of $k$ groups. Let   
\bea \label{idxratio}
r_k(\bf x) = \begin{cases}
            \dfrac{GI(k)-\min_j GI(j)}{\sum_{i=2}^K (GI(i)-\min_j GI(j))} \text{ \ for Condition A, } \\
            \dfrac{\max_j GI(j)- GI(k)}{\sum_{i=2}^K (\max_j GI(j) - GI(i))}  \text{  \ for Condition B, } \\
        \end{cases} 
\ena
where GI represents an arbitrary CVI. 

Condition A: The largest value of the GI indicates the optimal number of clusters.

Condition B: the smallest value of the GI indicates the optimal number of clusters.

Assume that 
\bea \label{datadist}
f({\bf x}|{\bf p}) = C({\bf p}) \prod_{k=2}^Kp_k^{nr_k(x)}
\ena
represents the conditional probability density function of the dataset given ${\bf p}$, where $C({\bf p})$ is the normalizing constant.

\subsection{Prior and posterior of ${\bf p}$}

In this section, we explore two different options for the prior distribution of ${\bf p}$, namely Dirichlet and GD priors.

\subsubsection{Dirichlet prior and posterior}

Here, we assume that  ${\bf p}$ follows a Dirichlet prior distribution with parameters ${\bm \alpha} = (\alpha_2,\ldots,\alpha_K)$ with the probability density function
\bea \label{Dprior}
\pi({\bf p}) = \frac{1}{B({\bm \alpha})} \prod_{k=2}^K p_k^{\alpha_k-1}.
\ena

\begin{thm} \label{Dmain}
Let $K \in \mathbb{N}$ and ${\bf r(x)} = (r_2({\bf x}),\ldots,r_K({\bf x}))$, where $r_k({\bf x})$ is defined as in \eqref{idxratio}. Assuming that ${\bf x}$ follows the distribution described in \eqref{datadist}, the posterior distribution of ${\bf p}$ has the probability density function:
\beas
\pi({\bf p}|{\bf x}) = \frac{1}{B({\bm \alpha} + n{\bf r(x)})} \prod_{k=2}^K p_k^{\alpha_k + nr_k({\bf x})-1}.
\enas
In particular, it follows a Dirichlet distribution with parameters ${\bm \alpha}+ n{\bf r(x)}$.

\end{thm}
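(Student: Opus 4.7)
The plan is a direct Dirichlet--multinomial conjugacy calculation: apply Bayes' rule, merge the two monomials in ${\bf p}$, and recognize the resulting density as Dirichlet with parameters ${\bm \alpha}+n{\bf r(x)}$.

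More concretely, I would first write $\pi({\bf p}|{\bf x}) \propto f({\bf x}|{\bf p})\,\pi({\bf p})$ viewed as a function of ${\bf p}$, then substitute the conditional from \eqref{datadist} together with the Dirichlet prior from \eqref{Dprior}. Combining the exponents of each $p_k$ yields
\beas
\pi({\bf p}|{\bf x}) \propto C({\bf p})\prod_{k=2}^K p_k^{\alpha_k + nr_k({\bf x}) - 1},
\enas
which, after absorbing the ${\bf p}$-independent part of $C({\bf p})$ into the normalizing constant, already matches the kernel of a Dirichlet distribution with parameters $(\alpha_k + nr_k({\bf x}))_{k=2}^{K}$. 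The normalizer is then supplied by the defining integral of the multivariate beta function,
\beas
\int_{\Delta}\prod_{k=2}^K p_k^{\alpha_k + nr_k({\bf x}) - 1}\,d{\bf p} = B({\bm \alpha} + n{\bf r(x)}),
\enas
where $\Delta = \{{\bf p}:p_k\ge 0,\ \sum_{k=2}^K p_k = 1\}$, which yields the displayed posterior density and, in particular, identifies it as a Dirichlet distribution with parameters ${\bm \alpha}+n{\bf r(x)}$.

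The only subtle point --- and the main obstacle I would need to address --- is the status of the factor $C({\bf p})$ appearing in \eqref{datadist}. For the conjugacy to go through verbatim, the ${\bf p}$-dependence of the joint $f({\bf x}|{\bf p})\,\pi({\bf p})$ must be entirely captured by the monomial $\prod_k p_k^{\alpha_k + nr_k({\bf x})-1}$, so I would argue, in direct analogy with the multinomial normalizing coefficient (which depends only on the observed counts and not on the probability vector), that $C({\bf p})$ is to be interpreted as a function of ${\bf x}$ alone and hence cancels between the Bayes numerator and the marginal $m({\bf x}) = \int f({\bf x}|{\bf p})\,\pi({\bf p})\,d{\bf p}$. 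Once this interpretation is settled, the remainder is a one-line calculation and Lemma~\ref{Dmoment} immediately supplies the posterior mean and variance with ${\bm \alpha}$ replaced by ${\bm \alpha}+n{\bf r(x)}$.
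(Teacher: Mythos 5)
Your proposal is correct and follows essentially the same route as the paper: form the joint density from \eqref{datadist} and \eqref{Dprior}, combine the exponents of each $p_k$, and normalize by $B({\bm \alpha}+n{\bf r(x)})$ via the defining integral of the multivariate beta function. Your explicit discussion of the factor $C({\bf p})$ is a welcome clarification rather than a deviation --- the paper's own proof silently pulls $C({\bf p})$ outside the integral over ${\bf p}$ when computing $m({\bf x})$, which is valid only under precisely the interpretation you articulate, namely that this normalizing factor does not actually vary with ${\bf p}$ and therefore cancels in Bayes' rule.
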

\begin{proof}
Starting from \eqref{datadist} and \eqref{Dprior}, the joint distribution of $({\bf x , p})$ is given by
\beas
f({\bf x , p}) = \frac{C({\bf p})}{B({\bm \alpha})} \prod_{k=2}^K p_k^{\alpha_k+nr_k({\bf x})-1}.
\enas
Integrating over ${\bf p}$ to obtain the joint probability density function, the marginal of ${\bf x}$ is 
\beas
m({\bf x}) = \frac{B({\bm \alpha}+ n{\bf r(x)} )}{B({\bm \alpha})} C({\bf p}).
\enas
Therefore, the posterior probability density function is 
\beas
\pi({\bf p}|{\bf x}) = \frac{f({\bf x , p})}{m({\bf x})} = \frac{1}{B({\bm \alpha} + n{\bf r(x)})} \prod_{k=2}^K p_k^{\alpha_k+nr_k({\bf x})-1}.
\enas
\end{proof}

The next corollary follows directly from the above theorem and Lemma \ref{Dmoment}.

\begin{corollary} \label{DirCor}
For $k = 2,3,\ldots,K$, the posterior means and variances of $p_k$ are given, respectively, by
\beas
\E[p_k|{\bf x}] = \frac{\alpha_k + nr_k({\bf x})}{\alpha_0+n},
\enas
and
\beas
\Var(p_k|{\bf x}) = \dfrac{(\alpha_k + nr_k(x))(\alpha_0 + n -\alpha_k-nr_k(x))}{(\alpha_0 + n)^2(\alpha_0 + n +1 )}.
\enas
where $\alpha_0$ is as in \eqref{alpha0} with $k$ from $2$ to $K$.

\end{corollary}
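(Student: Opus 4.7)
The plan is essentially to invoke Theorem \ref{Dmain} together with Lemma \ref{Dmoment} and do a short bookkeeping calculation on the sum of the updated parameters. Since Theorem \ref{Dmain} already identifies the posterior of ${\bf p}$ given ${\bf x}$ as a Dirichlet distribution with updated parameter vector ${\bm \alpha} + n{\bf r(x)}$, the entire task reduces to reading off the mean and variance of a single coordinate of a Dirichlet random vector.

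Concretely, I would first write $\tilde{\alpha}_k := \alpha_k + n r_k({\bf x})$ for $k=2,\ldots,K$ and let $\tilde{\alpha}_0 := \sum_{k=2}^K \tilde{\alpha}_k$. The posterior marginal of $p_k$ then has, by Lemma \ref{Dmoment} applied to the vector $(p_2,\ldots,p_K) \mid {\bf x}$,
\[
\E[p_k \mid {\bf x}] = \frac{\tilde{\alpha}_k}{\tilde{\alpha}_0}, \qquad \Var(p_k \mid {\bf x}) = \frac{\tilde{\alpha}_k(\tilde{\alpha}_0 - \tilde{\alpha}_k)}{\tilde{\alpha}_0^2 (\tilde{\alpha}_0 + 1)}.
\]
The only thing left is to simplify $\tilde{\alpha}_0$.

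The key (and really only) step is the observation that
\[
\tilde{\alpha}_0 = \sum_{k=2}^K \bigl(\alpha_k + n r_k({\bf x})\bigr) = \alpha_0 + n \sum_{k=2}^K r_k({\bf x}) = \alpha_0 + n,
\]
because the ratios $r_k({\bf x})$ defined in \eqref{idxratio} sum to $1$ by construction (both the Condition A and Condition B numerators sum, over $k$, to exactly the denominator). Substituting $\tilde{\alpha}_0 = \alpha_0 + n$ and $\tilde{\alpha}_k = \alpha_k + n r_k({\bf x})$ into the two Dirichlet-moment formulas above yields exactly the stated expressions for $\E[p_k \mid {\bf x}]$ and $\Var(p_k \mid {\bf x})$.

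There is no real obstacle here; the proof is a one-line invocation of Theorem \ref{Dmain} and Lemma \ref{Dmoment} followed by the trivial telescoping $\sum_k r_k({\bf x}) = 1$. The only mild care needed is to make sure the Dirichlet parametrization used in Lemma \ref{Dmoment} is indexed consistently with the shift from $k \in [K]$ to $k \in \{2,\ldots,K\}$, which is immediate since only the labeling of coordinates changes.
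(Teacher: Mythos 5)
Your proposal is correct and is exactly the paper's argument: the paper states that the corollary ``follows directly'' from Theorem \ref{Dmain} and Lemma \ref{Dmoment}, which is precisely your invocation of the Dirichlet posterior with updated parameters ${\bm \alpha} + n{\bf r(x)}$ followed by the moment formulas. Your explicit verification that $\sum_{k=2}^K r_k({\bf x}) = 1$, so that the updated parameter total is $\alpha_0 + n$, is the one small step the paper leaves implicit, and you handle it correctly.
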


\subsubsection{Generalized Dirichlet prior and posterior}

In this subsection, we consider a GD prior distribution for ${\bf p}$, characterized by parameters ${\bm \alpha} = (\alpha_2,\ldots,\alpha_{K-1})$ and ${\bm \beta} = (\beta_2,\ldots,\beta_{K-1})$, denoted as GD$({\bm \alpha, \bm \beta})$. The probability density function of this prior distribution is given by
\bea \label{GDprior}
\pi({\bf p}) =  \prod_{k=2}^{K-1} \frac{p_k^{\alpha_k-1}(1-p_2-\cdots-p_k)^{\gamma_k}}{B(\alpha_k,\beta_k)},
\ena
where $\gamma_k = \beta_k-\alpha_{k+1}-\beta_{k+1}$ for $k = 2,3\ldots,K-2$ and $\gamma_{K-1} = \beta_{K-1}-1$.

\begin{thm} \label{GDmain}
Let $K \in \mathbb{N}$ and ${\bf r(x)} = (r_2({\bf x}),\ldots,r_K({\bf x}))$ where $r_k({\bf x})$ is as in \eqref{idxratio}. Assume that ${\bf x}$ has a distribution following \eqref{datadist} and ${\bf p}$ has a prior distribution following \eqref{GDprior}. Then the posterior distribution of ${\bf p}$ has the probability density function given by
\beas
\pi({\bf p}|{\bf x}) = \prod_{k=2}^{K-1} \frac{p_k^{\alpha'_k-1}(1-p_2-\cdots-p_k)^{\gamma'_k}}{B(\alpha'_k,\beta'_k)},
\enas
where 
\bea \label{alpha_i}
\alpha_k' = \alpha_k + nr_k(x)
\ena
and 
\bea \label{beta_i}
\beta_k' = \beta_k + \sum_{i=k+1}^K nr_i(x) 
\ena
for $k=2,\ldots,K-1$, $\gamma'_k = \beta'_k-\alpha'_{k+1}-\beta'_{k+1}$ for $k = 2,3\ldots,K-2$ and $\gamma'_{K-1} = \beta'_{K-1}-1$.

In particular, it has a GD distribution with parameters ${\bm \alpha'} = (\alpha_2',\ldots,\alpha_{K-1}')$ and ${\bm \beta'} = (\beta_2',\ldots,\beta_{K-1}')$.

\end{thm}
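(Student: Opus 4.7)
The plan is to mimic the Bayesian bookkeeping used in the proof of Theorem \ref{Dmain}: multiply the likelihood \eqref{datadist} by the GD prior \eqref{GDprior}, rearrange the $\mathbf{p}$-dependence into the structural form of a GD kernel, and then identify the normalizing constant by integrating. The new wrinkle relative to the Dirichlet case is that the GD density carries only $p_2,\ldots,p_{K-1}$ as free coordinates while the likelihood has a factor $p_K^{nr_K(\mathbf{x})}$. So my first move is to substitute $p_K = 1-p_2-\cdots-p_{K-1}$ in the likelihood to reshape that factor into $(1-p_2-\cdots-p_{K-1})^{nr_K(\mathbf{x})}$, which is the form needed to match the GD kernel.

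After that substitution, collecting like terms in the product $f(\mathbf{x}|\mathbf{p})\pi(\mathbf{p})$ is straightforward. For each $k\in\{2,\ldots,K-1\}$, the prior factor $p_k^{\alpha_k-1}$ combines with the likelihood factor $p_k^{nr_k(\mathbf{x})}$ to give $p_k^{\alpha_k'-1}$, with $\alpha_k'$ as in \eqref{alpha_i}. The reshaped factor $(1-p_2-\cdots-p_{K-1})^{nr_K(\mathbf{x})}$ attaches only to the prior's $k=K-1$ term, pushing its exponent from $\gamma_{K-1}$ up to $\gamma_{K-1}+nr_K(\mathbf{x})$; the other prior factors $(1-p_2-\cdots-p_k)^{\gamma_k}$ for $k\le K-2$ are untouched. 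Thus, at the kernel level, the joint density already has the shape of a GD density, but with $(1-p_2-\cdots-p_k)$ exponents read off directly from this product rather than from the formula $\gamma_k'=\beta_k'-\alpha_{k+1}'-\beta_{k+1}'$ that a bona fide GD$(\bm{\alpha}',\bm{\beta}')$ would carry.

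The one calculation that needs care is therefore verifying that the two expressions for these exponents actually agree. Substituting \eqref{alpha_i} and \eqref{beta_i} into $\beta_k'-\alpha_{k+1}'-\beta_{k+1}'$ for $k\le K-2$, the three contributions $\sum_{i=k+1}^K nr_i(\mathbf{x})$, $nr_{k+1}(\mathbf{x})$, and $\sum_{i=k+2}^K nr_i(\mathbf{x})$ telescope exactly, leaving $\gamma_k'=\gamma_k$, which matches the kernel. For $k=K-1$ an analogous check yields $\gamma_{K-1}'=\beta_{K-1}-1+nr_K(\mathbf{x})=\gamma_{K-1}+nr_K(\mathbf{x})$, again matching. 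This index juggling is where I expect the main obstacle to be. Once it is settled, the joint density has the form $C(\mathbf{p})\prod_{k=2}^{K-1}p_k^{\alpha_k'-1}(1-p_2-\cdots-p_k)^{\gamma_k'}/B(\alpha_k,\beta_k)$; integrating this over the simplex and using that a GD$(\bm{\alpha}',\bm{\beta}')$ density integrates to one gives the marginal, and dividing yields the stated posterior with the prior normalizers $B(\alpha_k,\beta_k)$ replaced by $B(\alpha_k',\beta_k')$.
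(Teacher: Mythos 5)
Your proposal is correct and follows essentially the same route as the paper's proof: substitute $p_K=1-p_2-\cdots-p_{K-1}$ into the likelihood, absorb $p_k^{nr_k(\mathbf{x})}$ into the prior's $p_k^{\alpha_k-1}$ factors and $(1-p_2-\cdots-p_{K-1})^{nr_K(\mathbf{x})}$ into the $k=K-1$ term, verify the telescoping identity showing the resulting exponents equal $\gamma_k'=\beta_k'-\alpha_{k+1}'-\beta_{k+1}'$, and then integrate and normalize. The telescoping check you flag as the main point of care is exactly the displayed identity the paper uses.
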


\begin{proof}
    From \eqref{datadist} and \eqref{GDprior}, and utilizing the fact that $p_K = 1 - p_2 - \cdots - p_{K-1}$, the joint distribution of $({\bf x , p})$ is given by

\begin{equation*}
\begin{aligned}
\begin{split}
f({\bf x , p}) &=  \frac{C({\bf p})}{\prod_{k=2}^{K-1}B(\alpha_k,\beta_k)}\prod_{k=2}^{K-2}p_k^{\alpha_k+nr_k({\bf x})-1}(1-p_2-\cdots-p_k)^{\gamma_k} \\
& \quad \times \left[p_{K-1}^{\alpha_{K-1}+nr_{K-1}(x)-1}\left(1-p_2-\cdots-p_{K-1}\right)^{\beta_{K-1}+nr_K(x)-1}\right].
\end{split}
\end{aligned}       
\end{equation*}

Since 
\beas
\gamma_k &=& \beta_k-\alpha_{k+1}-\beta_{k+1} \\ 
&=& \left(\beta_k+\sum_{i=k+1}^K nr_i(x)\right) -(\alpha_{k+1}+nr_{k+1}(x)) - \left(\beta_{k+1}+ \sum_{i=k+2}^K nr_i(x)\right), 
\enas 
for $k=2,\ldots,K-2$ and
\beas
\beta_{K-1}+nr_K(x) -1 = \left(\beta_{K-1} + \sum_{i=K}^K nr_i(x)\right) -1,
\enas
taking an integral over ${\bf p}$ to the joint probability density function, the marginal of ${\bf x}$ is 
\beas 
m({\bf x}) = C({\bf p})\prod_{k=2}^{K-1}\frac{B\left(\alpha_k+nr_k(x) ,\beta_k  +\sum_{i=k+1}^{K} nr_i(x)\right)}{{B\left(\alpha_k,\beta_k\right)}}.
\enas
Therefore, the posterior probability density function can be written as 
\beas
\pi({\bf p}|{\bf x}) = \frac{f({\bf x , p})}{m({\bf x})} = \prod_{k=2}^{K-1} \frac{p_k^{\alpha_k'}(1-p_2-\cdots-p_k)^{\gamma_k'}}{B(\alpha_k',\beta_k')},
\enas
where $\gamma_k' = \gamma_k$ for $k=2,\ldots,K-2$ , $\gamma_{K-1}' = \beta_{K-1}'-1$ and $\alpha_k'$ and $\beta_k'$, are given in \eqref{alpha_i} and  \eqref{beta_i}, respectively.
\end{proof}

Next, we compute the posterior means and variances of ${\bf p}$.

\begin{corollary} \label{GDirCor}
For $k = 2,3,\ldots,K-1$, the posterior means of $p_i$ are given by

\beas
\E[p_k|{\bf x}] = \frac{\alpha_k'}{\alpha_k'+\beta_k'} \prod_{i:i<k} \frac{\beta_i'}{\alpha_i'+\beta_i'}
\text{ \ \  and \ \  }
\E[p_K|{\bf x}] = \prod_{k=2}^{K-1} \frac{\beta_k'}{\alpha_k'+\beta_k'},
\enas
and the variances are given by
\beas
\Var(p_k|{\bf x}) = \frac{(\alpha_k'+1)\alpha_k'}{(\alpha_k'+\beta_k'+1)(\alpha_k'+\beta_k')} \prod_{i:i<k} \frac{(\beta_i'+1)\beta_i'}{(\alpha_i'+\beta_i'+1)(\alpha_i'+\beta_i')} \\
- \frac{\alpha_k'^2}{(\alpha_k'+\beta_k')^2} \prod_{i:i<k} \frac{\beta_i'^2}{(\alpha_i'+\beta_i')^2},
\enas
and
\beas
\Var(p_K|{\bf x}) = \prod_{k=2}^{K-1} \frac{(\beta_k'+1)\beta_k'}{(\alpha_k'+\beta_k'+1)(\alpha_k'+\beta_k')} - \prod_{k=2}^{K-1} \frac{\beta_k'^2}{(\alpha_k'+\beta_k')^2},
\enas
where $\alpha_k'$ and $\beta_k'$, are given in \eqref{alpha_i} and  \eqref{beta_i}, respectively.
\end{corollary}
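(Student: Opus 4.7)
The plan is to apply Theorem \ref{GDmain}, which identifies the posterior of ${\bf p}$ as GD$({\bm \alpha'},{\bm \beta'})$, so that the desired means and variances reduce to evaluations of the general moment formula \eqref{Gmoment} and the $K$-th coordinate formula \eqref{GKmoment} from Lemma \ref{moment} at the parameters $\alpha_k',\beta_k'$ defined in \eqref{alpha_i} and \eqref{beta_i}. Modulo a harmless reindexing (Lemma \ref{moment} runs over $k=1,\ldots,K-1$ while the corollary runs over $k=2,\ldots,K-1$), everything will follow from correct bookkeeping on the $\delta$-exponents.

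First I would compute $\E[p_k|{\bf x}]$ for $k\in\{2,\ldots,K-1\}$ by plugging $s_k=1$ and $s_i=0$ for $i\neq k$ into \eqref{Gmoment}. This forces $\delta_i=1$ for $i<k$ and $\delta_i=0$ for $i\ge k$. The factors with $i>k$ become $\Gamma(\alpha_i'+\beta_i')\Gamma(\alpha_i')\Gamma(\beta_i')/[\Gamma(\alpha_i')\Gamma(\beta_i')\Gamma(\alpha_i'+\beta_i')]=1$ and drop out; the factor at $i=k$ simplifies via $\Gamma(\alpha_k'+1)=\alpha_k'\Gamma(\alpha_k')$ and $\Gamma(\alpha_k'+\beta_k'+1)=(\alpha_k'+\beta_k')\Gamma(\alpha_k'+\beta_k')$ to $\alpha_k'/(\alpha_k'+\beta_k')$; and each factor with $i<k$ simplifies analogously to $\beta_i'/(\alpha_i'+\beta_i')$. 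Multiplying yields the claimed formula. The mean $\E[p_K|{\bf x}]$ comes from \eqref{GKmoment} with $s=1$ and the same $\Gamma$-ratio simplification.

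For the variances I would use $\Var(p_k|{\bf x})=\E[p_k^2|{\bf x}]-(\E[p_k|{\bf x}])^2$. The second moment $\E[p_k^2|{\bf x}]$ is obtained by the same substitution but with $s_k=2$, so that $\delta_i=2$ for $i<k$ and $\delta_i=0$ for $i\ge k$. Using $\Gamma(\alpha_k'+2)=(\alpha_k'+1)\alpha_k'\Gamma(\alpha_k')$, $\Gamma(\beta_i'+2)=(\beta_i'+1)\beta_i'\Gamma(\beta_i')$ and the corresponding identities for the denominators, the product collapses to the first displayed expression in the variance formula. Subtracting the square of the mean, already computed, produces the stated identity. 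The case $k=K$ is handled identically starting from \eqref{GKmoment} with $s=2$.

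I do not expect a genuine obstacle here; the argument is a mechanical substitution and a telescoping of $\Gamma$-ratios. The one place where care is required is the vanishing of the $i>k$ factors in \eqref{Gmoment}, because a careless reader might expect them to contribute—verifying that $s_i+\delta_i=0$ for $i>k$ when only $s_k$ is nonzero is what makes the product truncate, which is the key combinatorial observation driving both the mean and variance calculations.
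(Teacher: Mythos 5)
Your proposal is correct and follows essentially the same route as the paper's own proof: identify the posterior as GD$({\bm\alpha'},{\bm\beta'})$ via Theorem \ref{GDmain}, apply \eqref{Gmoment} with $s_k=1$ and $s_k=2$ (and \eqref{GKmoment} with $s=1,2$ for $p_K$), track the $\delta_i$ exponents, telescope the Gamma ratios, and subtract the squared mean. No gaps.
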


\begin{proof}
For $k = 2,\ldots,K-1$, we apply \eqref{Gmoment} in Lemma \ref{moment} with $s_k = 1$ and $s_l=0$ for $l \ne k$, yielding

{\small
\begin{equation*}
\begin{aligned}
     \E[p_k|{\bf x}] & = \frac{\Gamma(\alpha_k' + \beta_k')\Gamma(\alpha_k' + 1)\Gamma(\beta_k')}{\Gamma(\alpha_k')\Gamma(\beta_k')\Gamma(\alpha_k'+\beta_k'+1)} \prod_{i:i<k} \frac{\Gamma(\alpha_i' + \beta_i')\Gamma(\alpha_i')\Gamma(\beta_i' + 1)}{\Gamma(\alpha_i')\Gamma(\beta_i')\Gamma(\alpha_i'+\beta_i'+1)} \prod_{i:i>k} \frac{\Gamma(\alpha_i' + \beta_i')\Gamma(\alpha_i')\Gamma(\beta_i')}{\Gamma(\alpha_i')\Gamma(\beta_i')\Gamma(\alpha_i'+\beta_i')}\\
     &= \frac{\alpha_k'}{\alpha_k'+\beta_k'} \prod_{i:i<k} \frac{\beta_i'}{\alpha_i'+\beta_i'},
\end{aligned}
\end{equation*} }
where we utilize $\delta_i = 1$ for $i<k$ and $\delta_i =0 $ for $i>k$. Similarly, for $s_k=2$ and $s_l=0$ for $l \ne k$, we have
{\small
\begin{equation*}
\begin{aligned}
\E[p_k^2|{\bf x}]  & = \frac{\Gamma(\alpha_k' + \beta_k')\Gamma(\alpha_k' + 2)\Gamma(\beta_k')}{\Gamma(\alpha_k')\Gamma(\beta_k')\Gamma(\alpha_k'+\beta_k'+2)} \prod_{i:i<k} \frac{\Gamma(\alpha_i' + \beta_i')\Gamma(\alpha_i')\Gamma(\beta_i' + 2)}{\Gamma(\alpha_i')\Gamma(\beta_i')\Gamma(\alpha_i'+\beta_i'+2)} \prod_{i:i>k} \frac{\Gamma(\alpha_i' + \beta_i')\Gamma(\alpha_i')\Gamma(\beta_i')}{\Gamma(\alpha_i')\Gamma(\beta_i')\Gamma(\alpha_i'+\beta_i')}\\
     &= \frac{(\alpha_k'+1)(\alpha_k')}{(\alpha_k'+\beta_k'+1)(\alpha_k'+\beta_k')} \prod_{i:i<k} \frac{(\beta_i'+1)(\beta_i')}{(\alpha_i'+\beta_i'+1)(\alpha_i'+\beta_i')}.
     \end{aligned}
\end{equation*} 
}

For the term $p_K$, applying \eqref{GKmoment} from Lemma \ref{moment} with $s=1$ and $s=2$, we respectively obtain
\begin{equation*}
    \E[p_K|{\bf x}]
     = \prod_{k=2}^{K-1} \frac{\Gamma(\alpha_k' + \beta_k')\Gamma(\beta_k' + 1)}{\Gamma(\beta_k')\Gamma(\alpha_k'+\beta_k'+1)}
     = \prod_{k=2}^{K-1} \frac{\beta_k'}{\alpha_k'+\beta_k'}
\end{equation*} 
and 
\begin{equation*}
    \E[p_K^2|{\bf x}] 
     = \prod_{k=2}^{K-1} \frac{\Gamma(\alpha_k' + \beta_k')\Gamma(\beta_k' + 2)}{\Gamma(\beta_k')\Gamma(\alpha_k'+\beta_k'+2)}
     = \prod_{k=2}^{K-1} \frac{(\beta_k'+1)\beta_k'}{(\alpha_k'+\beta_k'+1)(\alpha_k'+\beta_k')}.
\end{equation*}
\end{proof}

\subsection{Definition}

In this subsection, we present the definition of our proposed BCVI. For a data point ${\bf x} = (x_1,x_2,\dots,x_n)$, let $f({\bf x}|{\bf p})$ denote the conditional probability density function given ${\bf p}$, as specified in \eqref{datadist}, where $p_k$ represents the probability that the actual number of clusters is $k$. We further assume that the prior distribution of ${\bf p}$ follows either a Dirichlet or GD distribution, as described in the previous section. The BCVI is then defined as follows.

\begin{definition}
For $k = 2,3,\ldots, K$,
\bea \label{defBCVI}
\texttt{BCVI}(k) = \E[p_k|{\bf x}]
\ena
where $\E[p_k|{\bf x}]$ is computed according to either Corollary \ref{DirCor} or Corollary \ref{GDirCor}.
\end{definition}

Though any index may be taken as an underlying CVI in \eqref{idxratio}, in this work, we focus on testing our proposed BCVI based solely on the WI and the WP.  

\begin{remark}

\begin{enumerate}
    \item Since $p_k$ represents the probability that the actual number of clusters is $k$, we can use these probabilities to construct a confidence set. For instance, if $p_{i_1}+p_{i_2}+p_{i_3}$, we are confident that the actual number of clusters lies within the set $\{i_1,i_2,i_3 \}$.
    \item BCVI is only meaningful when the underlying CVI can provide a ranking of the optimal numbers of clusters. If the underlying CVI only indicates the best option, $p_k$ is not a valid probability.
\end{enumerate}

\end{remark}

\subsection{Mathematical properties}

In this subsection, we consider some properties of our BCVI. It is reasonable to assume that $\alpha_k$ is of the same order with respect to $n$ for all $k$.   

\subsubsection{Dirichlet prior}
By \eqref{defBCVI} and Corollary \ref{DirCor}, for $k=2,3,\ldots,K$, the BCVI is given by 
\bea \label{Dbcvi}
\texttt{BCVI}(k) = \frac{\alpha_k + nr_k({\bf x})}{\alpha_0+n}.
\ena

The following proposition analyzes the behavior of BCVI when $n$ is large according to the order of $\alpha_k$ in each situation.

\begin{proposition} \label{alphabeh}
For $k=2,3,\ldots,K$, the following holds.
\begin{enumerate}
    \item If $\max_k \alpha_k = o(n)$ as $n \rightarrow \infty$, then 
\beas
\lim_{n \rightarrow \infty}\texttt{BCVI}(k) = r_k({\bf x}).
\enas
\item If $\alpha_k = O(n)$, i.e. $\alpha_k/n \rightarrow c_k$ as $n \rightarrow \infty$ for some $c_k>0$ for all $k$, then 
\beas
\lim_{n \rightarrow \infty}\texttt{BCVI}(k) = \frac{c_k + r_k({\bf x})}{\sum_{j=2}^K c_j + 1}.
\enas
\item If $\alpha_k = O(g(n))$, $g(n)/n \rightarrow \infty$, and $\alpha_k/g(n) \rightarrow c_k$ as $n \rightarrow \infty$, then
\beas
\lim_{n \rightarrow \infty}\texttt{BCVI}(k) = \frac{c_k}{\sum_{j=2}^K c_j}.
\enas
\end{enumerate}

\end{proposition}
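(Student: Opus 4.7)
The plan is to work directly from the explicit formula
\[
\texttt{BCVI}(k) = \frac{\alpha_k + nr_k({\bf x})}{\alpha_0+n}
\]
given in \eqref{Dbcvi}, and in each of the three cases divide the numerator and denominator by the natural scaling factor dictated by the hypothesis on $\alpha_k$, then read off the limit. The main observation that I will use repeatedly is that $\alpha_0 = \sum_{j=2}^K \alpha_j$ inherits the same order of growth as $\max_k \alpha_k$: since $\max_k \alpha_k \le \alpha_0 \le (K-1)\max_k \alpha_k$, any asymptotic statement about $\max_k \alpha_k$ transfers to $\alpha_0$ up to constants. Note also that $r_k({\bf x})$ is a bounded quantity (a ratio of nonnegative terms with $r_k({\bf x})\in[0,1]$ by \eqref{idxratio}), so it does not interfere with any of the limits.

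For part (1), I would divide both the numerator and denominator by $n$. The numerator becomes $\alpha_k/n + r_k({\bf x})$, and by the hypothesis $\max_k \alpha_k = o(n)$ the term $\alpha_k/n \to 0$, leaving $r_k({\bf x})$. The denominator becomes $\alpha_0/n + 1$, and since $\alpha_0 \le (K-1)\max_k \alpha_k = o(n)$, this tends to $1$. Hence $\texttt{BCVI}(k) \to r_k({\bf x})$.

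For part (2), under $\alpha_k/n \to c_k$ for every $k$, I again divide by $n$. The numerator tends to $c_k + r_k({\bf x})$, and the denominator tends to $\sum_{j=2}^K c_j + 1$, yielding the claimed limit. For part (3), the correct scaling is by $g(n)$ instead of $n$. Dividing numerator and denominator by $g(n)$, the numerator becomes $\alpha_k/g(n) + (n/g(n))\, r_k({\bf x})$; by hypothesis the first term tends to $c_k$ and the second term tends to $0$ because $n/g(n) \to 0$. The denominator becomes $\alpha_0/g(n) + n/g(n)$, which tends to $\sum_{j=2}^K c_j + 0$. Dividing gives $c_k / \sum_{j=2}^K c_j$.

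None of the steps is a real obstacle — the argument is a routine limit computation once one chooses the correct scaling — but the one thing to be careful about is verifying the dominance of the correct term in the numerator and denominator in each regime, i.e.\ that in (1) the $nr_k({\bf x})$ term dominates $\alpha_k$ while $n$ dominates $\alpha_0$; in (2) the two terms in both numerator and denominator are of comparable size; and in (3) the $\alpha$ terms dominate because $g(n)/n \to \infty$. These follow transparently from the hypotheses together with the elementary bound $\max_k \alpha_k \le \alpha_0 \le (K-1)\max_k \alpha_k$.
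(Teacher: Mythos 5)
Your proof is correct and follows essentially the same route as the paper: in each case one divides numerator and denominator of $\texttt{BCVI}(k) = (\alpha_k + nr_k({\bf x}))/(\alpha_0+n)$ by the appropriate scaling ($n$ in cases 1 and 2, $g(n)$ in case 3) and passes to the limit term by term. Your explicit remark that $\max_k \alpha_k \le \alpha_0 \le (K-1)\max_k\alpha_k$, so that the hypothesis on $\max_k\alpha_k$ transfers to $\alpha_0$, is a small but welcome addition that the paper leaves implicit.
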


\begin{proof}

From \eqref{Dbcvi} and that $\alpha_k/n \rightarrow 0$, $\alpha_k \rightarrow c_k$, and $\alpha_k/g(n) \rightarrow c_k$ as $n \rightarrow \infty$ for all $k$, we obtain

\begin{equation*}
1. \lim_{n \rightarrow \infty}\texttt{BCVI}(k) =\lim_{n \rightarrow \infty}\frac{\alpha_k + nr_k({\bf x})}{\alpha_0+n} =  \lim_{n \rightarrow \infty} \frac{\frac{\alpha_k}{n} + r_k({\bf x})}{\frac{\sum_{j=2}^K \alpha_j}{n}+1}  \\
= r_k({\bf x}), 
\end{equation*}

\begin{equation*}
2. \lim_{n \rightarrow \infty}\texttt{BCVI}(k) =\lim_{n \rightarrow \infty}\frac{\alpha_k + nr_k({\bf x})}{\alpha_0+n} =  \lim_{n \rightarrow \infty} \frac{\frac{\alpha_k}{n} + r_k({\bf x})}{\frac{\sum_{j=2}^K \alpha_j}{n}+1} \\
= \frac{c_k + r_k({\bf x})}{\sum_{j=2}^K c_j + 1},
\end{equation*} 
and

\begin{equation*}
3. \lim_{n \rightarrow \infty}\texttt{BCVI}(k) =\lim_{n \rightarrow \infty}\frac{\alpha_k + nr_k({\bf x})}{\alpha_0+n} =  \lim_{n \rightarrow \infty} \frac{\frac{\alpha_k}{g(n)} + \frac{nr_k({\bf x})}{g(n)}}{\frac{\sum_{j=2}^K \alpha_j}{g(n)}+\frac{n}{g(n)}}  \\
= \frac{c_k}{\sum_{j=2}^K c_j}.
\end{equation*} 

\end{proof}

\begin{remark} \label{remarkDirichlet}
Theorem \ref{alphabeh} implies that if $\alpha_k$ is of a larger order than $n$, then the underlying index has no impact on BCVI when $n$ is large. Similarly, if $\alpha_k$ is of the same order as $n$, then the underlying index has the same impact on BCVI regardless of the sample size. Therefore, selecting $\alpha_k = o(n)$ such that $\alpha_k \rightarrow \infty$ as $n \rightarrow \infty$, such as $\alpha_k = O(\sqrt{n})$, makes the most sense. However, in cases where users have strong beliefs about the expected number of clusters, $\alpha_k = O(n)$ may also be chosen. 
\end{remark}

The next two propositions provide the properties of BCVI for some specific ${\bm \alpha}$.

\begin{proposition}
If $\alpha_k$, $k=2,3,\ldots,K$ are all equal, then BCVI and the underlying CVI, i.e., GI, are equivalent.
\end{proposition}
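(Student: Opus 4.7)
The plan is to substitute the equal-prior assumption into the explicit Dirichlet formula for BCVI derived in \eqref{Dbcvi}, and then check that $\texttt{BCVI}(k)$ becomes a strictly monotone affine function of the underlying index $GI(k)$. Since ``equivalent'' here means inducing the same ordering on $\{2,3,\ldots,K\}$ (so that the argmax/argmin used to pick the optimal number of clusters coincides with the one given by $GI$), an affine reparametrization with a sign-appropriate slope is exactly what is needed.

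More concretely, I would first set $\alpha_k = \alpha$ for every $k$ in \eqref{Dbcvi} to get
\begin{equation*}
\texttt{BCVI}(k) \;=\; \frac{\alpha + n\, r_k({\bf x})}{(K-1)\alpha + n},
\end{equation*}
so $\texttt{BCVI}(k)$ depends on $k$ only through $r_k({\bf x})$, with a positive multiplicative constant $n/((K-1)\alpha + n)$ and an additive constant independent of $k$. Then I would invoke the definition \eqref{idxratio} of $r_k({\bf x})$: under Condition A it is an increasing affine function of $GI(k)$ (the denominator is a positive constant and the numerator is $GI(k)$ minus a constant), while under Condition B it is a decreasing affine function of $GI(k)$.

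Combining the two affine maps, $\texttt{BCVI}(k)$ is an increasing affine function of $GI(k)$ in Condition A and a decreasing one in Condition B. Since Condition A declares the largest $GI$ optimal and BCVI uses the largest value as optimal, the rankings agree; in Condition B the smallest $GI$ is optimal, which under the order-reversing affine map corresponds to the largest $\texttt{BCVI}$, so again the induced orderings coincide. This establishes the claimed equivalence.

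There is no real obstacle here; the only subtle point is making explicit what ``equivalent'' means, namely that the two indices produce the same ordering (and in particular the same optimum) on $\{2,\ldots,K\}$, and handling Conditions A and B symmetrically so that the sign flip in Condition B is absorbed by the fact that BCVI and $GI$ use opposite extrema to select the optimal $k$.
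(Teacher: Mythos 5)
Your proposal is correct and follows essentially the same route as the paper: substitute $\alpha_k=\alpha$ into \eqref{Dbcvi}, observe that $\texttt{BCVI}(k)$ depends on $k$ only through $r_k({\bf x})$, and note that $r_k({\bf x})$ is a monotone transformation of $GI(k)$ so the rankings coincide. You merely spell out the Condition A/B sign bookkeeping that the paper compresses into the phrase ``adjusted monotonously.''
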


\begin{proof}
Assume that $\alpha_k = \alpha$ for all $k$. From \eqref{Dbcvi}, we have
\beas
\texttt{BCVI}(k)  = \frac{\alpha + nr_k({\bf x})}{(K-1)\alpha+n}
\enas
which is a function of $k$ only through $r_k(x)$ defined in \eqref{idxratio}. Since $r_k(x)$ is adjusted monotonously from the underlying index GI$(k)$, BCVI$(k)$ and GI$(k)$ yield exactly the same ranking of the preferred numbers of groups.
\end{proof}

\begin{proposition} \label{localProp}
Let $s\in \mathbb{N}$ and $i_1,i_2,\ldots, i_s \in \{2,3,\ldots,K\}$ be local peaks of GI$(k)$. If for any $j \in [s]$, $\alpha_{i_j-1}, \alpha_{i_j}, \alpha_{i_j+1}$ are equal, then $i_1,i_2,\ldots, i_s$ remain local peaks with respect to BCVI$(k)$.
\end{proposition}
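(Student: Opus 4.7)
The plan is to reduce the claim to a direct comparison of values of $r_k({\bf x})$ around each local peak, using the closed-form expression \eqref{Dbcvi} for BCVI under the Dirichlet prior.

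First, I would translate ``local peak of GI'' into a statement about $r_k({\bf x})$. Inspecting \eqref{idxratio}, the map $\texttt{GI}(k) \mapsto r_k({\bf x})$ is an affine transformation that is strictly monotone in $\texttt{GI}(k)$: strictly increasing under Condition A and strictly decreasing under Condition B. In both cases, ``$k$ is a local peak of GI'' (meaning $k$ locally optimizes the underlying index in the direction that favours clustering quality, i.e.\ a local maximum under Condition A and a local minimum under Condition B) is equivalent to $r_{k}({\bf x}) > r_{k-1}({\bf x})$ and $r_{k}({\bf x}) > r_{k+1}({\bf x})$. Making this translation explicit lets the argument handle both conditions uniformly.

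Next, fix a local peak $i_j$. By the hypothesis $\alpha_{i_j-1} = \alpha_{i_j} = \alpha_{i_j+1}$, and applying \eqref{Dbcvi}, I would compute the consecutive differences
\begin{equation*}
\texttt{BCVI}(i_j) - \texttt{BCVI}(i_j \pm 1) \;=\; \frac{(\alpha_{i_j} - \alpha_{i_j \pm 1}) + n\bigl(r_{i_j}({\bf x}) - r_{i_j \pm 1}({\bf x})\bigr)}{\alpha_0 + n} \;=\; \frac{n\bigl(r_{i_j}({\bf x}) - r_{i_j \pm 1}({\bf x})\bigr)}{\alpha_0 + n}.
\end{equation*}
The numerator is strictly positive by the previous step and the denominator is positive, hence $\texttt{BCVI}(i_j) > \texttt{BCVI}(i_j \pm 1)$, confirming that $i_j$ is a local peak of $\texttt{BCVI}$. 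Repeating this for each $j \in [s]$ yields the claim.

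I do not anticipate a genuine obstacle. The only subtle point is maintaining a uniform treatment of Conditions A and B by passing through $r_k({\bf x})$; once that translation is in place the argument is a one-line cancellation of the equal $\alpha$-values in the numerator of \eqref{Dbcvi}.
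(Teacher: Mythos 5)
Your proposal is correct and follows essentially the same route as the paper's proof: cancel the equal $\alpha$-values in the numerator of \eqref{Dbcvi} so that the comparison of $\texttt{BCVI}$ at $i_j$ and its neighbours reduces to the comparison of $r_{i_j}({\bf x})$ with $r_{i_j\pm 1}({\bf x})$, which the peak hypothesis settles. Your version is in fact slightly more careful than the paper's on two small points — you make explicit the monotone passage from $\texttt{GI}$ to $r_k({\bf x})$ under both Conditions A and B, and you write the common denominator correctly as $\alpha_0+n$ rather than $(K-1)\alpha+n$ (which would require all of $\alpha_2,\ldots,\alpha_K$ to be equal) — but neither point changes the argument.
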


\begin{proof}
For $j \in [s]$, assume that $\alpha_{i_j-1}= \alpha_{i_j}= \alpha_{i_j+1} = \alpha$. Then, by \eqref{Dbcvi}, we have
$\texttt{BCVI}_{i_j-1}  = \frac{\alpha + nr_{i_j-1}({\bf x})}{(K-1)\alpha+n}$, $\texttt{BCVI}_{i_j}  = \frac{\alpha + nr_{i_j}({\bf x})}{(K-1)\alpha+n}$, and $\texttt{BCVI}_{i_j+1}  = \frac{\alpha + nr_{i_j+1}({\bf x})}{(K-1)\alpha+n}$. Since $i_j$ is a local peak, $r_{i_j}$ is greater than both $r_{i_j-1}$ and $r_{i_j+1}$. This implies that $\texttt{BCVI}$ still has a local peak at $i_j$. 
\end{proof}

\subsubsection{Generalized Dirichlet prior}

By \eqref{defBCVI} and Corollary \ref{GDirCor}, the BCVI is 
\bea \label{GDbcvi}
\texttt{BCVI}(k) = \frac{\alpha_k + nr_k(x)}{\alpha_k+\beta_k + \sum_{i=k}^K nr_i(x) } \prod_{i:i<k} \frac{\beta_i + \sum_{j=i+1}^K nr_j(x)}{\alpha_i+\beta_i + \sum_{j=i}^K nr_j(x)}
\ena
for $k=2,3,\ldots,K-1$, and 
\bea \label{GDbcvi2}
\texttt{BCVI}(K) = 
\prod_{k=2}^{K-1} \frac{\beta_k + \sum_{i=k+1}^K nr_i(x)}{\alpha_k+\beta_k + \sum_{i=k}^K nr_i(x)}.
\ena

The following proposition examines the behavior of BCVI when $n$ is large relative to the orders of $\alpha_k$ and $\beta_k$ in each scenario.

\begin{proposition} \label{alphabeh}
\begin{enumerate}
    \item If $\max_k \alpha_k = o(n)$ and $\max_k \beta_k = o(n)$ as $n \rightarrow \infty$, then for $k=2,3,\ldots,K$, 
\beas
\lim_{n \rightarrow \infty}\texttt{BCVI}(k) =r_k(x).
\enas

\item If $\alpha_k = O(n)$ and $\beta_k = O(n)$, i.e. $\alpha_k/n \rightarrow c_k$ and $\beta_k/n \rightarrow d_k$ as $n \rightarrow \infty$ for some $c_k,d_k>0$ for all $k$, then for $k=2,3,\ldots,K-1$,
\beas
\lim_{n \rightarrow \infty}\texttt{BCVI}(k) 
= \frac{c_k +  r_k(x)}{c_k + d_k + \sum_{i=k}^K r_i(x)} \prod_{i:i<k} \frac{d_i + \sum_{j=i+1}^K r_j(x)}{c_i + d_i + \sum_{j=i}^K r_j(x)},
\enas
and
\beas
\lim_{n \rightarrow \infty}\texttt{BCVI}(K) =\prod_{k=2}^{K-1} \frac{d_k + \sum_{i=k+1}^K r_i(x)}{c_k + d_k + \sum_{i=k}^K r_i(x)} .
\enas

\item If $\alpha_k = O(g(n))$ and $\beta_k = O(g(n))$, $g(n)/n \rightarrow \infty$, and $\alpha_k/g(n) \rightarrow c_k$ and $\beta_k/g(n) \rightarrow d_k$ as $n \rightarrow \infty$, then for $k=2,3,\ldots,K-1$,
\beas
\lim_{n \rightarrow \infty}\texttt{BCVI}(k) = 
\frac{c_k }{c_k + d_k }  \prod_{i:i<k} \frac{d_i}{c_i + d_i} ,
\enas
and
\beas
\lim_{n \rightarrow \infty}\texttt{BCVI}(K) =\prod_{k=2}^{K-1} \frac{d_k }{c_k + d_k} .
\enas

\end{enumerate}

\end{proposition}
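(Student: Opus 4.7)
The plan is to handle all three cases by a single mechanism: divide the numerator and denominator of every factor in \eqref{GDbcvi} and \eqref{GDbcvi2} by the appropriate rate (either $n$ in Cases 1--2, or $g(n)$ in Case 3), so that each ratio turns into something whose limit is immediate from the hypotheses on $\alpha_k/n$, $\beta_k/n$ (or $\alpha_k/g(n)$, $\beta_k/g(n)$) and the boundedness of $r_k(\mathbf{x})\in[0,1]$. Since $\texttt{BCVI}(k)$ is a finite product (of length at most $K-1$) of ratios of affine functions in $n$, each factor converges, and the limit of the product is the product of the limits.

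For Case 1, divide every factor by $n$. Using $\alpha_k/n\to 0$ and $\beta_k/n\to 0$, the factor indexed by $k$ tends to
\[
\frac{r_k(\mathbf{x})}{\sum_{i=k}^{K} r_i(\mathbf{x})}\prod_{i<k}\frac{\sum_{j=i+1}^{K} r_j(\mathbf{x})}{\sum_{j=i}^{K} r_j(\mathbf{x})}.
\]
The main (and only interesting) step is to recognize the telescoping: writing $S_k=\sum_{i=k}^{K} r_i(\mathbf{x})$, we have $S_2=1$ (by the definition \eqref{idxratio} of $r_k$), and the product collapses to $S_k/S_2=S_k$, so the whole expression equals $r_k(\mathbf{x})\cdot S_k/S_k=r_k(\mathbf{x})$. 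The case $k=K$ similarly telescopes to $S_K/S_2=r_K(\mathbf{x})$. I expect this telescoping identity, together with the normalization $\sum_{k=2}^K r_k(\mathbf{x})=1$ coming from \eqref{idxratio}, to be the one substantive observation in the proof.

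For Case 2, again divide by $n$; the hypotheses $\alpha_k/n\to c_k$ and $\beta_k/n\to d_k$ then convert each factor into the corresponding ratio with $c_k,d_k$ added to both numerator and denominator, producing the stated formulas directly (no telescoping identity is needed here, since the constants $c_k,d_k$ in general destroy the cancellation of Case~1). For Case 3, divide instead by $g(n)$; since $g(n)/n\to\infty$ and $r_k(\mathbf{x})\le 1$, the terms of the form $nr_k(\mathbf{x})/g(n)$ vanish in the limit, leaving only $c_k$ and $d_k$ in numerator and denominator, which yields the final two formulas.

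The write-up therefore reduces to three parallel limit computations, each started by dividing through by the natural scale. No delicate estimate is required; the only non-mechanical ingredient is the telescoping argument in Case 1 verifying that the limit is exactly $r_k(\mathbf{x})$ (and $r_K(\mathbf{x})$ when $k=K$), so I would present Case 1 in full and then indicate that Cases 2 and 3 follow by the same division trick with the obvious modifications.
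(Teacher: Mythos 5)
Your proposal is correct and follows essentially the same route as the paper: divide each factor of \eqref{GDbcvi} and \eqref{GDbcvi2} by $n$ (or $g(n)$ in Case 3), pass to the limit factor by factor, and in Case 1 use the normalization $\sum_{k=2}^K r_k(\mathbf{x})=1$ so that the product telescopes to $r_k(\mathbf{x})$. The only difference is that you spell out the telescoping identity explicitly, which the paper leaves implicit in its final equality.
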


\begin{proof}

Using \eqref{GDbcvi} and \eqref{GDbcvi2} and the assumptions 1, 2, and 3 in the statement, we obtain the following.
\begin{enumerate}
    \item  Dividing both the numerator and the denominator of \eqref{GDbcvi} by $n^{k-1}$ for $k = 2,\ldots,K-1$ and of \eqref{GDbcvi2} by $n^{K-2}$, we have
\begin{equation*}
\begin{aligned}
   \lim_{n \rightarrow \infty}\texttt{BCVI}(k) 
   &= \lim_{n \rightarrow \infty} \frac{\frac{\alpha_k}{n} + r_k(x)}{\frac{\alpha_k}{n}+\frac{\beta_k}{n} + \sum_{i = k}^K r_i(x)} \prod_{i:i<k} \frac{\frac{\beta_i}{n} + \sum_{j = i+1}^K r_j(x)}{\frac{\alpha_i}{n} + \frac{\beta_i}{n} + \sum_{j = i}^K r_j(x)} \\
   &= \frac{r_k(x)}{\sum_{i = k}^K r_i(x)} \prod_{i:i<k} \frac{\sum_{j = i+1}^K r_j(x)}{\sum_{i = k}^K r_i(x)} = r_k(x),
\end{aligned}
\end{equation*} 
and 
\begin{equation*}
    \begin{aligned}
        \lim_{n \rightarrow \infty}\texttt{BCVI}(K) &= \lim_{n \rightarrow \infty} \prod_{k=2}^{K-1} \frac{\beta_k + \sum_{i=k+1}^K nr_i(x)}{\alpha_k+\beta_k + \sum_{i=k}^K nr_i(x)} = \lim_{n \rightarrow \infty} \prod_{k=2}^{K-1} \frac{\frac{\beta_k}{n} + \sum_{i=k+1}^K r_i(x)}{\frac{\alpha_k}{n}+ \frac{\beta_k}{n} + \sum_{i=k}^K r_i(x)} \\
         &= \prod_{k=2}^{K-1} \frac{\sum_{i=k+1}^{K} r_i(x)}{\sum_{i=k}^{K} r_i(x)} = r_K(x).
    \end{aligned}
\end{equation*}

        \item  Dividing both the numerator and denominator of \eqref{GDbcvi} by $n^{k-1}$ for $k = 2,\ldots,K-1$ and of \eqref{GDbcvi2} by $n^{K-2}$, we obtain

\begin{equation*}
    \begin{aligned}
         \lim_{n \rightarrow \infty}\texttt{BCVI}(k)  
         &= \lim_{n \rightarrow \infty} \frac{\frac{\alpha_k}{n} + r_k(x)}{\frac{\alpha_k}{n}+\frac{\beta_k}{n} + \sum_{i = k}^K r_i(x)} \prod_{i:i<k} \frac{\frac{\beta_i}{n} + \sum_{j = i+1}^K r_j(x)}{\frac{\alpha_i}{n} + \frac{\beta_i}{n} + \sum_{j = i}^K r_j(x)} \\
         &= \frac{c_k +  r_k(x)}{c_k + d_k + \sum_{i=k}^K r_i(x)} \prod_{i:i<k} \frac{d_i + \sum_{j=i+1}^K r_j(x)}{c_i + d_i + \sum_{j=i}^K r_j(x)},
    \end{aligned}
\end{equation*} 
and 
\begin{equation*} 
    \begin{aligned}
        \lim_{n \rightarrow \infty}\texttt{BCVI}(K)  
        = \lim_{n \rightarrow \infty} \prod_{k=2}^{K-1} \frac{\frac{\beta_k}{n} + \sum_{i=k+1}^K r_i(x)}{\frac{\alpha_k}{n} + \frac{\beta_k}{n}+ \sum_{i=k}^K r_i(x)} 
        = \prod_{k=2}^{K-1} \frac{d_k + \sum_{i=k+1}^K r_i(x)}{c_k + d_k + \sum_{i=k}^K r_i(x)}.
    \end{aligned}
\end{equation*}

\item Dividing both the numerator and the denominator of \eqref{GDbcvi} by $g^{k-1}(n)$ for $k = 2,\ldots,K-1$, and dividing both the numerator and denominator of \eqref{GDbcvi2} by $g^{K-2}(n)$, we obtain
\begin{equation*}
    \begin{aligned}
        \lim_{n \rightarrow \infty}\texttt{BCVI}(k) 
         &= \lim_{n \rightarrow \infty} \frac{\frac{\alpha_k}{g(n)} + \frac{nr_k(x)}{g(n)}}{\frac{\alpha_k}{g(n)}+\frac{\beta_k}{g(n)} + \frac{n}{g(n)}\sum_{i = k}^K r_i(x)} \prod_{i:i<k} \frac{\frac{\beta_i}{g(n)} + \frac{n}{g(n)}\sum_{j = i+1}^K r_j(x)}{\frac{\alpha_i}{g(n)} + \frac{\beta_i}{g(n)} + \frac{n}{g(n)}\sum_{j = i}^K r_j(x)} \\
         &= \frac{c_k }{c_k + d_k } \prod_{i:i<k} \frac{d_i }{c_i + d_i},
    \end{aligned}
\end{equation*} 
and
\begin{equation*}
    \begin{aligned}
        \lim_{n \rightarrow \infty}\texttt{BCVI}(K)  
        &= \lim_{n \rightarrow \infty} \prod_{k=2}^{K-1} \frac{\frac{\beta_k}{g(n)} + \frac{n}{g(n)}\sum_{i=k+1}^K r_i(x)}{\frac{\alpha_k}{g(n)} + \frac{\beta_k}{g(n)}+ \frac{n}{g(n)}\sum_{i=k}^K r_i(x)}= \prod_{k=2}^{K-1} \frac{d_k }{c_k + d_k}.
    \end{aligned}
\end{equation*}

\end{enumerate}

\end{proof}

It is evident that we can interpret the above proposition similarly to Remark \ref{remarkDirichlet} regarding the Dirichlet case.

\section{Experimental results} \label{sec:exp}

In this section, we demonstrate the benefits of our proposed BCVI by comparing it to the underlying indices and several other existing indices as listed in Section \ref{sec:background}. Our experiment is divided into two distinctive parts, each described in a separate subsection: artificial datasets and real-world datasets, including MRI datasets. We test our BCVI using two clustering algorithms: K-means and FCM with a fuzziness parameter of $2$ for hard and soft clustering, respectively. For the results presented in this entire section, we run either K-means or FCM for a total of 20  rounds and select the one with the smallest objective function. This is to make sure that the result is appropriate for testing the CVIs.
To facilitate our experiment, we utilize our dedicated R package called “UniversalCVI” \cite{UniversalCVI} within the RStudio environment \cite{Rstudio}. The ``Hvalid'' and ``FzzyCVIs'' functions are applied to compute all the indices for K-means and FCM, respectively.

\begin{table}[H]
\centering
\caption{Accuracy of clustering algorithms on labeled datasets}
\centering\includegraphics[width=16cm]{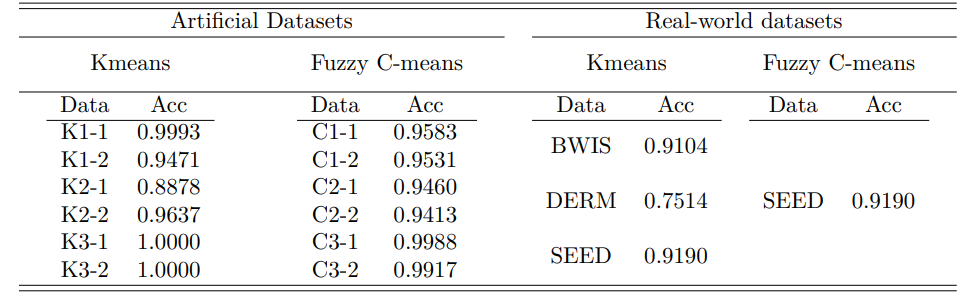}
\label{Acctab}
\end{table}

\begin{table}[H]
\centering
\caption{Dirichlet prior parameters}
\centering\includegraphics[width=16cm]{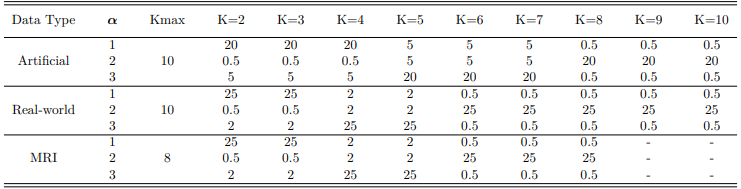}
\label{alphatable}
Note: The parameters shown in this table will be multiplied by $\sqrt{n}$ where $n$ is the number of data points.
\end{table}

The artificial datasets (K1-1 to K3-2, C1-1 to C3-2) are newly generated from the Gaussian distribution and the uniform distribution, except for K1-2 and C1-2, which are from \cite{Benchmarksdata2019}. The datasets are intentionally named K and C to correspond with K-means and FCM, respectively. The real-world datasets include BWIS, DERM, and SEED from UCI \cite{UCI2013}, and MRI brain tumor datasets (TUMOR1 from \cite{mri1}, and the remainder from \cite{mri234}). Since the WP underlying index is not directly applicable to big data, we resize the MRI images to 85×85 pixels before applying the BCVI while implementing the other existing CVIs to the full images. The selection of the main clustering algorithm on each dataset is based on accuracy when setting the number of clusters to be the actual number of groups. To ensure that the datasets are appropriate for testing the performance of CVIs, we first verify that the main clustering algorithms are applicable to them. We check the accuracy of K-means and FCM on those labeled datasets, as shown in Table \ref{Acctab}, using the `AccClust' function in the package `UniversalCVI'. Note that we intend to consider only the datasets with 75\% accuracy or above. This criterion is set because if the main clustering algorithm is unable to find the correct groups, then it is not suitable for testing the performance of CVIs.

As previously mentioned, our experiment for the proposed BCVI is conducted using only the Dirichlet prior. Therefore, we must first set its parameters according to \eqref{Dprior}. The BCVI results presented in this section are computed based on the parameters outlined in Table \ref{alphatable}. It is important to note that for each dataset, we propose three parameter options corresponding to three different scenarios: when the user prefers a small (${\bm \alpha}_1$), large (${\bm \alpha}_2$), and moderate (${\bm \alpha}_3$) number of groups, respectively. This approach is advantageous for users who have an approximate idea of how many clusters they expect.

\subsection{Artificial datasets}
In this subsection, we assess the performance of our BCVI on artificial datasets categorized into three distinct cases, as outlined below, in order to highlight the advantages of our proposed Bayesian approach. These datasets comprise both benchmark datasets and simulated datasets, as mentioned previously.

\begin{list}{}{} \label{datagroup}
\item{\bf Case 1: }{The underlying index incorrectly detects the true number of groups}
\item{\bf Case 2: }{The underlying index originally leads to one of the secondary options}
\item{\bf Case 3: }{The underlying index correctly detects the true number of groups, but users seek a secondary option because the optimal one is either too small or too large}
\end{list}

\begin{figure*}[h]
\centering\includegraphics[width=14cm]{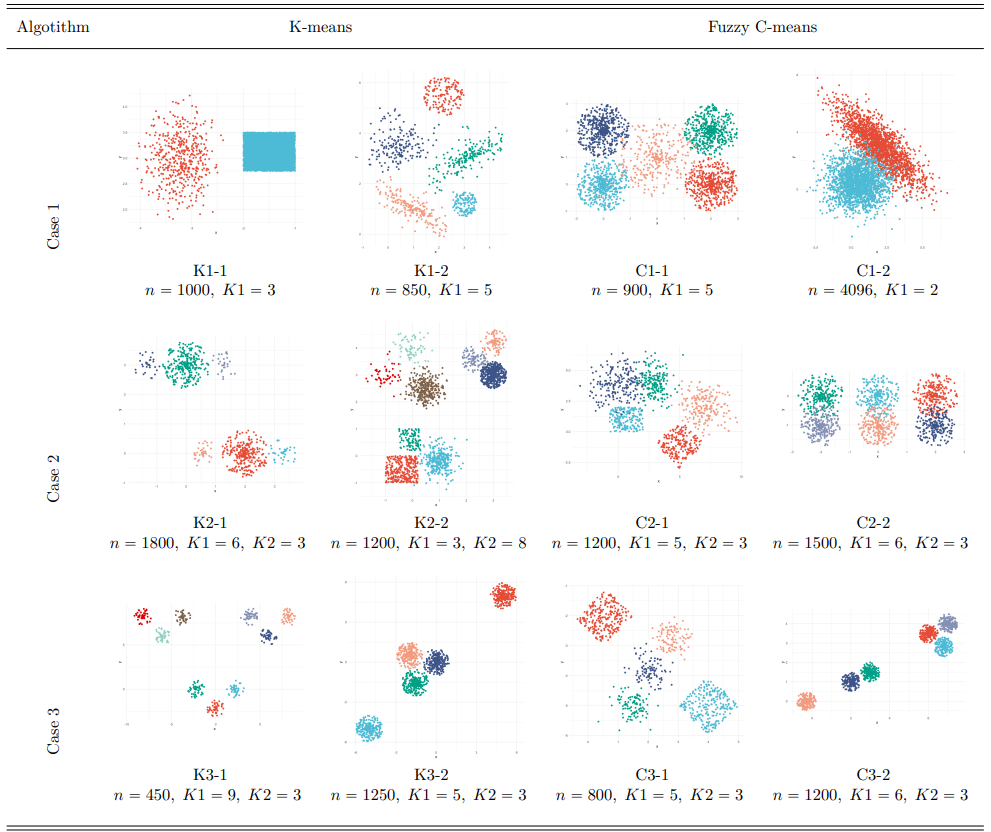}
\text{Note: $n$, $K1$, and $K2$ are the number of data points, the first and secondary options for the number}
\text{of clusters, respectively.}
\caption{Artificial datasets}
\label{ArfData}
\end{figure*}

\begin{table}[H]
\centering
\caption{Hard BCVI on artificial datasets}
\centering\includegraphics[width=16cm]{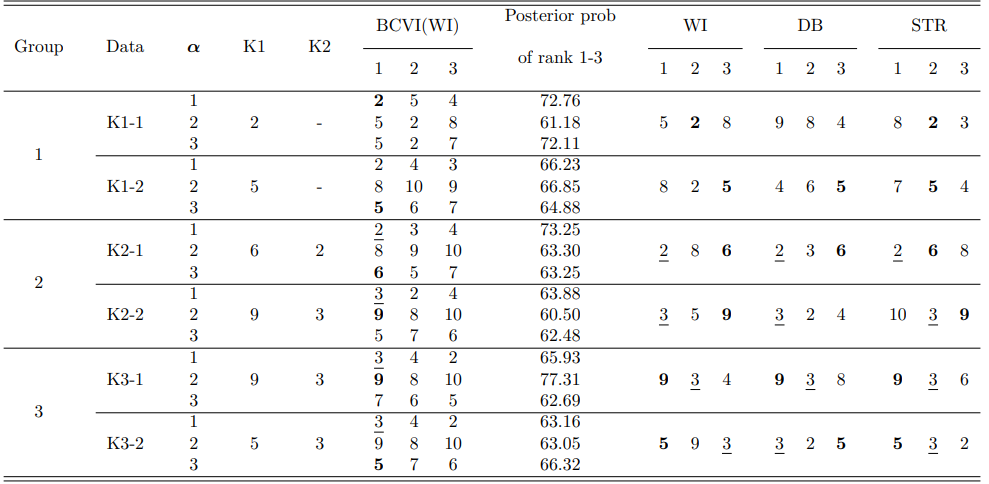}
\label{kmeansresults}
Note: K1 and K2 (if exists) are the optimal and the second optimal numbers of clusters, respective. The CVIs results are bold for K1 and underlined for K2 (these also apply to all the tables below). 
\end{table}

\begin{table}[H]
\centering
\caption{Soft BCVI on artificial datasets}
\centering\includegraphics[width=15cm]{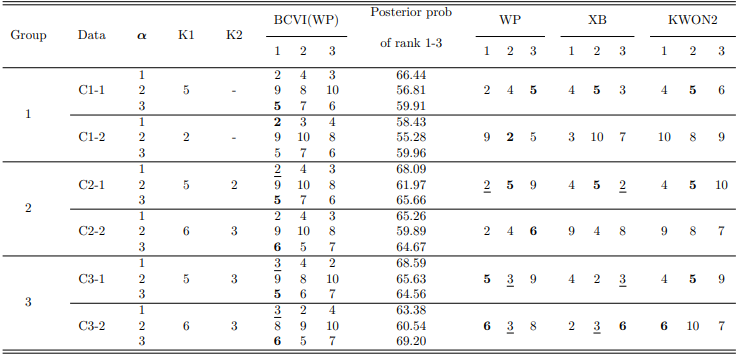}
\label{fcmresults}
\end{table}

The 12 datasets are depicted in Figure \ref{ArfData}: six for hard and soft CVIs each, categorized into the three previously mentioned cases. The parameters are configured for hypothetical scenarios based on users' preferences: two to four groups (${\bm \alpha}_1$), eight to ten groups (${\bm \alpha}_2$), and five to seven groups (${\bm \alpha}_3$). Table \ref{kmeansresults} and \ref{fcmresults} display the first three ranks for the final number of clusters according to each CVI, where the true optimal and second optimal options are bold and underlined, respectively. The parameters for BCVI are determined according to Table \ref{alphatable}, with the rationale explained thereafter. From Table \ref{kmeansresults} and \ref{fcmresults}, for the hard and soft clustering results, respectively, in cases 1 and 2, all the indices incorrectly detect the number of clusters. However, when applying our BCVI with the true number of clusters falling into the correct parameter scenario, BCVI provides the correct decision. This implies that the underlying index exhibits a local peak at the true number of clusters. In case 3, we illustrate when the underlying index can accurately detect the number of clusters, but that number is either too large or too small to implement. By adjusting the parameters to align with the requirements, BCVI guides the final decision toward a secondary choice. Additionally, we calculate the posterior probability of the first three ranks, as shown in Table \ref{kmeansresults} and \ref{fcmresults}. It indicates that we can be approximately 60\% to 80\% confident that the actual number of clusters falls within the first three ranks.

\subsection{Real-world and MRI datasets}

\begin{figure*}[h]
\centering\includegraphics[width=13cm]{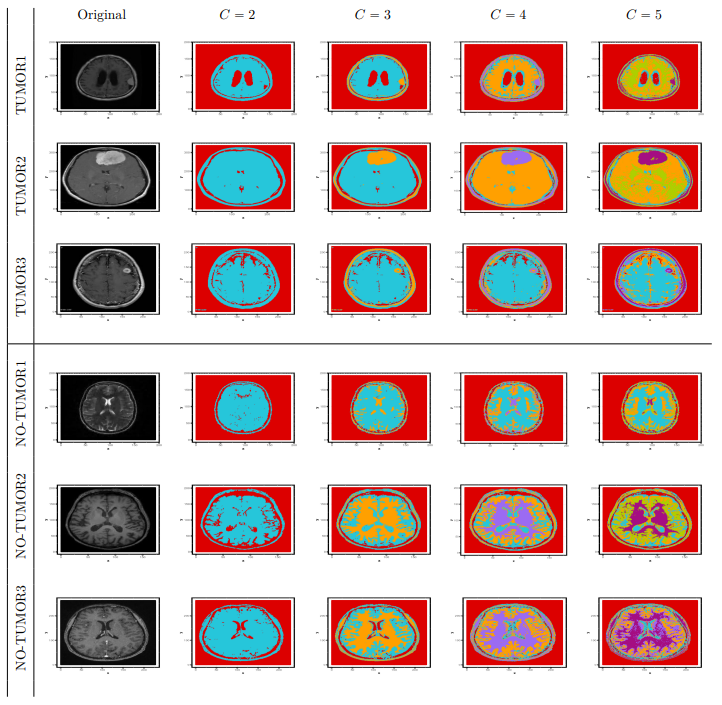}
\caption{MRI Datasets}
\label{fig:image}
\end{figure*}

In this subsection, we apply our BCVI to real-world datasets: BWIS, DERM, and SEED (Table \ref{rwresults}), along with MRI brain tumor images shown in Figure \ref{fig:image}. For hard clustering, we consider DB, STR, and WI indices, while for soft clustering, we consider XB, KWON2, and WP indices. We intentionally select datasets where our underlying indices incorrectly detect the true numbers of classes. This is to underscore that our BCVI can rectify incorrect estimations when users possess prior knowledge about their expected number of classes. Table \ref{rwresults} and \ref{MRIresults} display the first three ranks for the final number of clusters based on each CVI, where the true optimal option is bolded. The parameters for BCVI are determined according to Table \ref{alphatable}, with the rationale explained thereafter.

\begin{table}[h]
\centering
\caption{Hard BCVI on real-world datasets}
\centering\includegraphics[width=16cm]{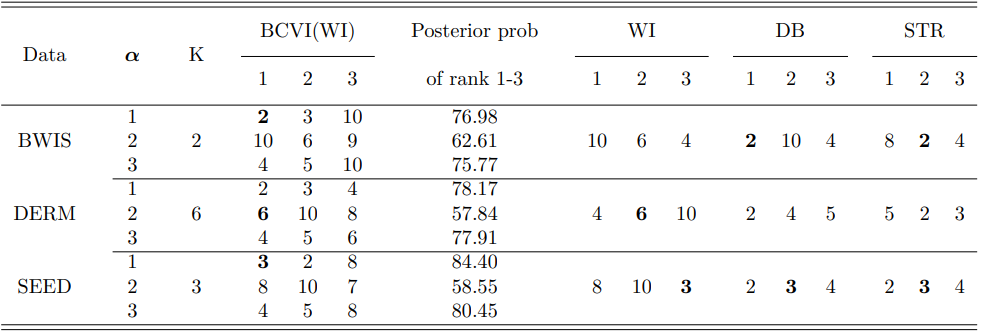}
\label{rwresults} 
\end{table}

\begin{table}[h]
\centering
\caption{Soft BCVI on real-world and MRI datasets}
\centering\includegraphics[width=16cm]{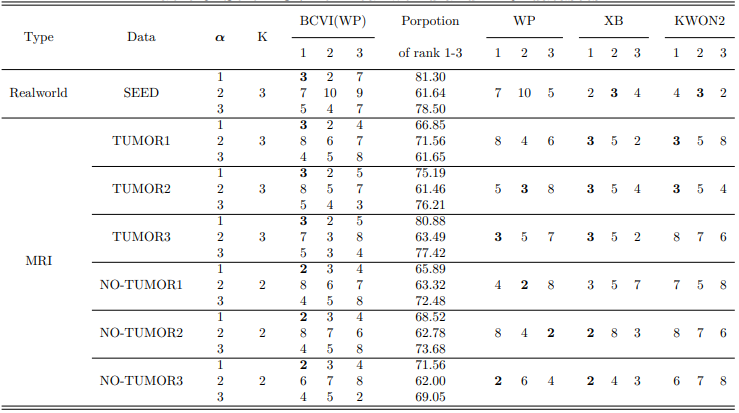}
\label{MRIresults}
\end{table}

For the real-world datasets, except MRI images, BCVI has the capability to correct erroneous underlying CVI decisions when the parameters are appropriately configured. Additionally, we can have approximately 80\% confidence that the true number of clusters falls within the first three ranks of BCVI.

To detect tumors in MRI images, it is evident from Figure \ref{fig:image} that three colors are most appropriate for images with tumors, while two colors suffice for those without tumors. Additionally, having too many colors can needlessly complicate the images. Therefore, we set the parameters to be large at two and three groups and limit the maximum number of clusters considered to be eight. From Table \ref{MRIresults}, it is apparent that none of the existing indices can accurately detect the correct numbers of clusters for all six images. However, by applying BCVI with ${\bm \alpha}_1$ from Table \ref{alphatable} and using the WP underlying index, we successfully identify the numbers of clusters for all six images. According to Proposition \ref{localProp}, this suggests that the underlying WP index possesses correct local peaks for all six MRI images.  

\begin{figure*}[h!]
\centering\includegraphics[width=11cm]{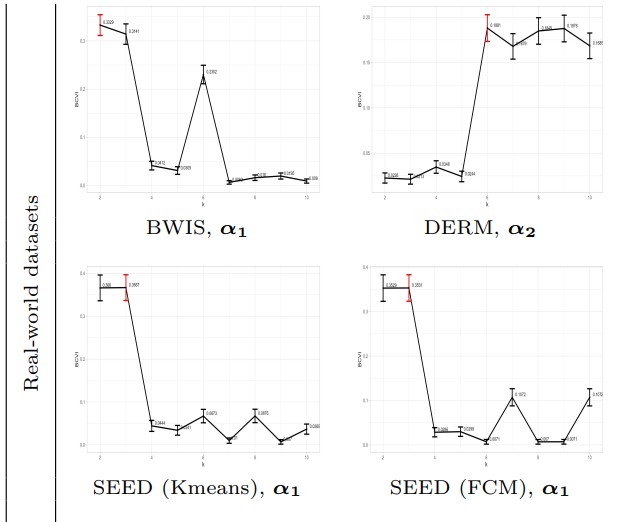}
\caption{BCVI values on real-world datasets}
\label{fig:ggraphrw}
Note: We show only ${\bm \alpha}$ values from Table \ref{alphatable} that yield our intended numbers of groups.
\end{figure*}

\begin{figure*}[h!]
\centering\includegraphics[width=13cm]{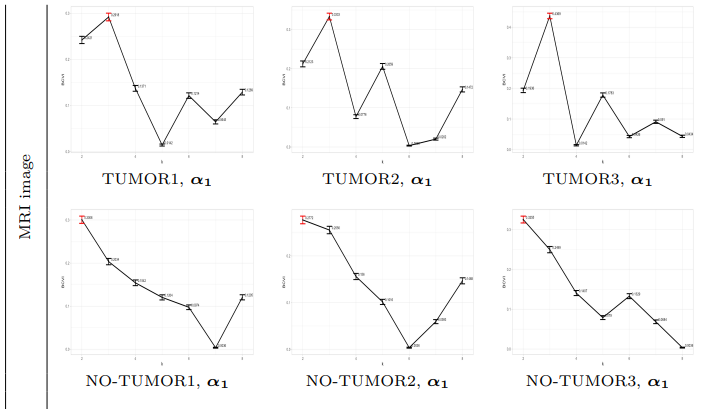}
\caption{BCVI values on MRI image dataset}
\label{fig:ggraphmri}
Note: We show only ${\bm \alpha}$ values from Table \ref{alphatable} that yield our intended numbers of groups.
\end{figure*}

Additionally, we provide plots of the BCVI values with two standard deviation error bars for the ${\bm \alpha}$ values corresponding to our intended numbers of groups in Figures \ref{fig:ggraphrw} and \ref{fig:ggraphmri}. These standard deviations are computed directly from Corollary \ref{DirCor}. This aims to demonstrate the variation of BCVI values concerning $k$. We present examples of these plots solely for the real-world case to illustrate that the statistical inference of our BCVI can be further analyzed in future works.

\section{Conclusion} \label{sec:conclusion}

In this work, we propose a new concept that directly applies the Bayesian framework to CVIs. This connection is unique and has not been explored before in the literature. Specifically, we introduce BCVI, which can be used in conjunction with any existing CVIs. This is advantageous for users who have prior knowledge of the expected number of clusters in their datasets. Although we only test BCVI with K-means and FCM, it can be applied to any clustering algorithms compatible with any CVIs. BCVI calculates the posterior probability of the actual number of clusters, with the prior distribution being either Dirichlet or GD. The main features of our BCVI are as follows:

\begin{enumerate}
    \item {\bf Novel and unique concept:} BCVI allows users to blend their knowledge with a dataset's pattern to identify the final number of clusters.
    \item {\bf Flexibility:} BCVI allows users to flexibly set parameters according to their needs and select any clustering algorithms and underlying CVIs of their choice. 
\end{enumerate}

We present the results of BCVI together with WI and WP, the underlying CVIs, and four other existing CVIs, namely DB, STR, XB, and KWON2. While this is not a comparative test due to the new concept, the results affirm that BCVI offers several primary advantages:
\begin{enumerate}
    \item {\bf Correcting erroneous results:} BCVI can lead to the correct number of clusters in cases where the underlying CVI is incorrect. However, this requires users to select appropriate parameters based on their knowledge.
    \item {\bf Providing alternative options:} BCVI can suggest alternative suboptimal numbers of clusters if the optimal one is not suitable for users in their context.
\end{enumerate}
These advantages are especially advantageous when the expected range is definite. For example, we are aware that a few colors are enough for MRI image pre-processing. 
The limitations of BCVI are that: 
\begin{enumerate}
    \item It relies on the quality of underlying indices.
    \item It is only effective when underlying indices are present, providing meaningful options for ranking local peaks for the final number of clusters.
\end{enumerate}

Future research directions include studying the statistical inference of BCVI, exploring alternative prior distributions, testing BCVI with additional clustering algorithms and underlying CVIs, and applying it to diverse real-world applications.

\end{document}